\documentclass[a4paper,fleqn]{cas-sc}

\usepackage[numbers, sort&compress]{natbib}
\usepackage{mathtools}
\usepackage{bbm}

\def\tsc#1{\csdef{#1}{\textsc{\lowercase{#1}}\xspace}}
\tsc{WGM}
\tsc{QE}

\newtheorem{theorem}{Theorem}
\newtheorem{lemma}[theorem]{Lemma}
\newtheorem{assumption}{Assumption}
\newdefinition{remark}{Remark}
\newproof{proof}{Proof}

\newcommand{\xstar}{x^*}
\newcommand{\RR}{\mathbb{R}}
\newcommand{\cc}{\mathfrak{c}}
\newcommand{\FF}{\mathcal{F}}
\newcommand{\EE}{\mathbb{E}}
\newcommand{\Cbar}{\overline C}
\newcommand{\Abar}{\overline A}
\newcommand{\Gbar}{\overline G}
\newcommand{\Bbar}{\overline B}
\newcommand{\PP}{\mathbb{P}}
\newcommand{\Mtilde}{\widetilde M}
\newcommand{\indi}{\mathbbm{1}}

\newcommand{\efrak}{\mathfrak{e}}

\begin{document}

\ExplSyntaxOn
\cs_set:Npn \__first_footerline: { }
\ExplSyntaxOff

\let\WriteBookmarks\relax
\def\floatpagepagefraction{1}
\def\textpagefraction{.001}

\shorttitle{Concentration and Mean-Square Bounds for Contractive SA}    

\shortauthors{Chandak}  

\title [mode = title]{Concentration and Mean-Square Bounds for Contractive Stochastic Approximation: A Unified Elementary Approach}  



%

\author[1]{Siddharth Chandak}[orcid=0000-0003-3237-7729]

\cormark[1]


\ead{chandaks@stanford.edu}



\affiliation[1]{organization={Department of Electrical Engineering, Stanford University},
            city={Stanford},
            postcode={94305}, 
            state={CA},
            country={USA}}







\cortext[1]{Corresponding author}


\begin{abstract}
We establish mean-square and concentration bounds for stochastic approximation (SA) with arbitrary norm contractive mappings, under a multiplicative noise model where the noise may scale affinely with the norm of the iterates, and the iterates are potentially unbounded. These settings arise in reinforcement learning, where operators are often contractive in the $\ell_\infty$ norm and the noise scales with the iterates. To address the arbitrary norm, earlier works replace the non-smooth squared norm with a smooth Lyapunov function constructed via the generalized Moreau envelope. For concentration analysis, these works handle multiplicative noise and unbounded iterates through a multi-stage bootstrapping argument that starts from a time-varying worst-case bound and iteratively refines it. We instead present a unified and elementary analysis that yields both bounds. Using an averaged noise sequence and corresponding auxiliary iterates, we obtain a one-step Lyapunov drift inequality for the normed error directly, without smoothing the norm or constructing an envelope. For the mean-square bound, we combine this drift inequality with an induction argument showing that the iterates remain bounded in expectation. For the concentration bound, we develop a probabilistic induction over a sequence of ``good'' events on which the iterates are controlled, allowing the standard Azuma-Hoeffding bound to be applied. Our approach yields the first sub-Gaussian tailed maximal (all-time) concentration bound for SA under multiplicative noise, by allowing the stepsize to depend logarithmically on the confidence level. Beyond the specific setting considered here, we discuss the generalizability of these proof techniques to other noise models and iterative algorithms.
\end{abstract}



\begin{keywords}
 stochastic approximation\sep mean-square error bound\sep concentration bound\sep multiplicative noise
\end{keywords}

\maketitle

\section{Introduction}
Stochastic approximation (SA) is a popular class of iterative algorithms used for finding the fixed point of an operator given its noisy realizations \citep{Kushner}. These algorithms have been widely studied over the past few decades due to their applications in fields such as reinforcement learning (RL), optimization, communication networks, and stochastic control \citep{Borkar-book}. While asymptotic convergence of SA has been widely studied in the literature, applications in optimization and RL have led to significant interest in obtaining finite-time guarantees, primarily in two forms: mean-square error bounds and high-probability (concentration) bounds.

Finite-time bounds are typically obtained via a Lyapunov drift inequality that quantifies how the squared norm of the error evolves at each step. This is straightforward for the Euclidean norm, whose square is smooth. However, RL applications often involve arbitrary norm contractions, e.g., Q-learning is contractive w.r.t.\ the max ($\ell_\infty$) norm \citep{Q-learning}, and SSP Q-learning is contractive w.r.t.\ the weighted max norm \citep{SSP}. As the squared norm is not smooth in these cases, obtaining a Lyapunov drift inequality is challenging. Another challenge is the multiplicative noise model, in which the noise scales affinely with the norm of the iterate rather than being uniformly bounded, and the iterates are potentially unbounded. Such noise models arise for algorithms such as $\text{TD}(0)$ for policy evaluation \citep{TD0}. This is especially difficult for concentration bounds, where the lack of almost-sure bounds on the iterates obstructs the use of standard martingale concentration inequalities.

Prior works handle these two challenges as follows. To obtain Lyapunov drift inequalities under arbitrary norm contractions, \citet{Zaiwei-Neurips} replace the non-smooth squared norm with a smooth Lyapunov function constructed via the generalized Moreau envelope. To deal with multiplicative noise and unbounded iterates in the concentration setting, \citet{Zaiwei-conc} develop a multi-stage bootstrapping argument that starts from a time-varying worst-case bound on the iterates and iteratively refines it, using moment-generating-function bounds and an exponential supermartingale with Ville's maximal inequality. 

We instead propose a unified and elementary framework that yields both the mean-square and concentration bounds. We describe our techniques and contributions in detail next.

\subsection{Our Contributions}
Our contributions are twofold: a unified and elementary analysis technique, and, as a consequence, the first sub-Gaussian tailed concentration bound for SA under multiplicative noise.
\begin{itemize}
    \item \textbf{Methodological contributions:} Our proof relies on two techniques: an averaged noise sequence together with corresponding auxiliary iterates to obtain a one-step Lyapunov drift inequality, and an inductive argument to handle the multiplicative noise and the lack of almost-sure bounds on the iterates. We briefly discuss the generalizability of these techniques to other noise models and iterative algorithms in Section~\ref{sec:conclusions}.
    \begin{itemize}
        \item \textbf{Averaged noise sequence and auxiliary iterates:} For the SA iteration with stepsize $\beta_k$, iterate $x_k$, and martingale difference noise $M_{k+1}$, we define an averaged noise sequence $\xi_{k+1} = (1-\beta_k)\xi_k + \beta_k M_{k+1}$ and auxiliary iterates $z_k = x_k - \xi_k$. This noise-averaging construction was introduced by \citet{Bravo} to obtain bounds for non-expansive SA. Here, we employ it to obtain a one-step Lyapunov drift inequality for the error $\|z_k - x^*\|_c^2$ directly, for an arbitrary norm $\|\cdot\|_c$ and fixed point $x^*$, without constructing an envelope. It then remains to bound $\|\xi_k\|_c$, which we do separately for the mean-square and concentration analyses using an inductive argument.

        \item \textbf{Inductive argument (mean-square):} For the mean-square bound, we use a straightforward induction showing that the iterates remain bounded in expectation, i.e., $\mathbb{E}\left[\|x_k\|_c^2\right]$ stays uniformly bounded for all $k$, which suffices to bound $\mathbb{E}\left[\|\xi_k\|_c^2\right]$.
    
        \item \textbf{Probabilistic induction (concentration):} The concentration bound is more delicate. As the iterates have no almost-sure bound, neither does the noise, which prevents a direct application of standard martingale concentration inequalities. To address this, we introduce a sequence of ``good'' events $\{C_k\}$, where $C_k$ is the event that the error is controlled at time $k$. Our goal is to upper bound the probability that these good events are violated even once, i.e., upper bound the probability $\PP\big(\bigcup_{k} \Cbar_k\big)$. We decompose this event by the first time a good event fails, writing $\bigcup_{k}\Cbar_k = \bigcup_{k}\big(\big(\bigcap_{i=0}^{k-1} C_i\big)\cap \Cbar_k\big)$, so that on each term all good events hold up to time $k-1$. Up to this time the iterates, and hence the noise, are bounded, allowing us to replace $M_{i+1}$ by the bounded sequence $\widetilde{M}_{i+1} = M_{i+1}\indi_{C_i}$ for $i\leq k-1$, and apply the Azuma--Hoeffding inequality. Summing over all time steps completes the argument.
    \end{itemize}
    Although the two settings use different inductive arguments, both proofs follow the same outline. First, a one-step drift inequality is established for the auxiliary iterates. Then, an inductive argument supplies the missing control on the iterates. Despite its simplicity, our approach does not sacrifice sharpness. For example, for the $\ell_\infty$ norm relevant to many RL applications, we obtain a rate of $O\left(\frac{\log(d)}{k}\right)$, matching \citet{Zaiwei-Neurips}. Using the same auxiliary iterates, but without the inductive arguments, we also recover mean-square and concentration bounds for the additive noise model (i.e., where the noise is uniformly bounded) in Appendix~\ref{app:additive}. 

    \item \textbf{Sub-Gaussian tailed concentration bound:} We obtain a high-probability bound of $O\left(\frac{\log(d(k+1)/\delta)}{k+h}\right)$ on $\|x_k-\xstar\|_c^2$, holding for all $k\geq 0$. This is the first sub-Gaussian tailed maximal (all-time) concentration bound for SA under the multiplicative noise model with possibly unbounded iterates. Here, ``maximal'' means that the bound holds simultaneously over the entire trajectory $\{x_k\}_{k\geq0}$, rather than at a single fixed time $k$. Obtaining this bound requires either the initial stepsize to be small, with $\beta_0 = O(1/\log(1/\delta))$. This dependence on $\delta$ is unavoidable: for a $\delta$-independent stepsize, \citet{Zaiwei-conc} obtain a bound scaling as $\log^m(1/\delta)$ with $m$ possibly greater than 1, and establish an impossibility result showing that a sub-Gaussian tail cannot be achieved in this setting. This comparison with \citet{Zaiwei-conc} is discussed in detail in Remark~\ref{remark-comparison-Zaiwei-conc}.

\end{itemize}
 
\subsection{Related Work}
We broadly divide the related work into two categories, based on the type of bound obtained.

\paragraph{Mean-square bounds.} While there has been significant work in recent years on finite-time mean-square error bounds for SA, much of it has focused on contraction with respect to the Euclidean norm. This includes works on linear SA \citep[e.g.,][]{SrikantYing, BhandariRussoSingal} and SGD \citep[e.g.,][]{MoulinesBach, nemirovski2009robust}. In contrast, we focus on works that deal with arbitrary norm contractions. \citet{Zaiwei-Neurips} were the first to obtain mean-square error bounds for SA with arbitrary norm contractions, using generalized Moreau envelopes. This technique was later extended to Markovian SA in \citet{Zaiwei-Markov} and to two-timescale SA in \citet{Chandak-CDC}. The generalized Moreau envelope approach across these works is presented as a general roadmap for finite-time analysis in the recent survey \citep{Lyapunov-survey}. In contrast, we do not use the generalized Moreau envelope, and instead employ an averaged noise sequence and auxiliary iterates to obtain a drift inequality for the normed error directly.

\paragraph{Concentration bounds.} There has been significant work on high-probability bounds for SA under the additive noise model, where the noise is uniformly bounded, including works on Q-learning \citep[e.g.,][]{Li-QL1, Li-QL2} and general SA \citep[e.g.,][]{ThoppeBorkar, Wierman, Chandak-conc}. A separate line of work studies concentration bounds for linear SA under multiplicative noise, particularly the $\text{TD}(0)$ algorithm: \citet{Dalal} obtain a bound that holds only from some time onward, \citet{Chandak-TD0} obtain a bound with a non-exponential tail, and \citet{Durmus} obtain a bound for SA with constant stepsizes. For general SA under multiplicative noise, the first maximal concentration bound was obtained in \citet{Zaiwei-conc}, which we compare against in detail in Remark~\ref{remark-comparison-Zaiwei-conc}. This has recently been extended to Markovian noise by \citet{Shangtong, Maguluri}. More recently, \citet{Pham} obtain time-uniform bounds at a level of abstraction that would require our noise-averaging construction and auxiliary iterates as an input to handle arbitrary-norm contractions. While they too use $\delta$-dependent stepsizes, this controls the initial unstable phase of their applications, and their theory does not cover the setting considered here.

In contrast to the works above, which rely on separate, specialized machinery for the mean-square and concentration analyses, we provide a unified and elementary analysis that yields both. While we recover the existing rates in the mean-square case, we obtain a sub-Gaussian tailed bound in the concentration case by working in the setting where the stepsize depends mildly on the confidence level.

\subsection{Outline and Notation}
The paper is organized as follows. Section~\ref{sec:formulation} sets up the problem and presents the mean-square and concentration bounds, followed by remarks comparing them with prior work and discussing the assumptions. Section~\ref{sec:proof-outline} gives a proof sketch of both bounds. Section~\ref{sec:conclusions} concludes with a discussion of the applicability of our proof techniques to settings beyond those considered here.

Throughout, $\|\cdot\|_c$ denotes an arbitrary norm on $\RR^d$, and $\|\cdot\|_2$ and $\|\cdot\|_\infty$ denote the Euclidean ($\ell_2$) and max ($\ell_\infty$) norms, respectively. For sequences, $O(\cdot)$ and $\Omega(\cdot)$ denote the standard order notation.

\section{Problem Formulation and Results}\label{sec:formulation}
Consider the following iteration.
\begin{equation}\label{SA-iter}
    x_{k+1}=x_k+\beta_k(f(x_k)-x_k+M_{k+1}).
\end{equation}
Here, $x_k\in\RR^d$ is the iterate and $\beta_k$ is the stepsize at time $k\geq 0$. The function $f(\cdot):\RR^d\to\RR^d$ denotes the mapping whose fixed point we wish to find, and $M_{k+1}$ denotes the martingale difference noise sequence. In this paper, we consider stepsize sequences of the form
\[\beta_k=\frac{\beta}{k+h},\]
where $\beta,h>0$. Analogous results for other choices of stepsize sequence can also be obtained using the techniques presented in this paper.

The first assumption states that the mapping $f(\cdot)$ is contractive with respect to some arbitrary norm $\|\cdot\|_c$.
\begin{assumption}\label{assu-contrac}
    The function $f(\cdot)$ is contractive with respect to norm $\|\cdot\|_c$, i.e., for all $x_1,x_2\in\RR^d$,
    \[\|f(x_1)-f(x_2)\|_c\leq \lambda\|x_1-x_2\|_c.\]
    Here $0\leq \lambda<1$ is the contraction factor.
\end{assumption}
Using the Banach contraction mapping theorem, the above assumption implies that $f(\cdot)$ has a unique fixed point. We denote this fixed point by $\xstar$, i.e., there exists a unique $\xstar$ such that $f(\xstar)=\xstar$. Our objective is to obtain mean-square and high-probability bounds on $\|x_k-\xstar\|_c$.

\subsection{Mean-Square Error Bound}
We first state the assumption on the noise sequence $\{M_{k+1}\}$ required for the mean-square error bound.
\begin{assumption}\label{assu-noise-MSE}
Define the family of $\sigma$-fields $\FF_k=\sigma(x_0, M_i, i\leq k)$. Then $\{M_{k+1}\}$ is a martingale difference sequence with respect to $\FF_k$, i.e.,
$\EE\left[M_{k+1}\mid \FF_k\right]=0.$
Moreover, for all $k\geq 0$,
\[\EE\left[\|M_{k+1}\|_c^2\mid \FF_k\right]\leq \sigma^2\left(1+\|x_k\|_c^2\right),\]
for some constant $\sigma>0$.
\end{assumption}
We next state our first key result, the mean-square error bound on the SA iteration.
\begin{theorem}\label{thm-MSE}
    Suppose Assumptions~\ref{assu-contrac} and \ref{assu-noise-MSE} are satisfied, and that $\beta\geq 2/(1-\lambda)$. Then there exist constants $\cc_1,\cc_2>0$ such that if $h\geq \cc_1$, then
    \[\EE\left[\|x_k-\xstar\|_c^2\right]\leq \frac{\cc_2}{k+h}.\]
\end{theorem}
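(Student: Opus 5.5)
We follow the outline announced in the introduction: average the noise, derive a drift inequality for the auxiliary iterates in the norm $\|\cdot\|_c$ itself, and close the argument by induction.

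\textbf{Step 1: auxiliary iterates.} Set $\xi_0=0$ and
\[\xi_{k+1}=(1-\beta_k)\xi_k+\beta_k M_{k+1},\qquad z_k=x_k-\xi_k .\]
Subtracting these two recursions from \eqref{SA-iter} gives
\[z_{k+1}=(1-\beta_k)z_k+\beta_k f(x_k),\]
and the noise has disappeared. Since $f(\xstar)=\xstar$, the triangle inequality and Assumption~\ref{assu-contrac} give, for $\beta_k\le 1$,
\[\|z_{k+1}-\xstar\|_c\le(1-\beta_k)\|z_k-\xstar\|_c+\beta_k\lambda\|x_k-\xstar\|_c\le (1-(1-\lambda)\beta_k)\|z_k-\xstar\|_c+\lambda\beta_k\|\xi_k\|_c .\]
We square this and use $(a+b)^2\le(1+\eta)a^2+(1+1/\eta)b^2$ with $\eta=(1-\lambda)\beta_k/2$. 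This yields the one-step drift
\[\|z_{k+1}-\xstar\|_c^2\le \bigl(1-\tfrac{1-\lambda}{2}\beta_k\bigr)\|z_k-\xstar\|_c^2+\tfrac{C}{1-\lambda}\beta_k\|\xi_k\|_c^2 .\]
No smoothing of the norm is needed here, because the recursion for $z_k$ is deterministic given $x_k$.

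\textbf{Step 2: the averaged noise.} We need $\EE\|\xi_k\|_c^2=O(\beta_k)$. Since $\|\cdot\|_c$ is not smooth, we pass through the Euclidean norm using norm equivalence $\ell_{c}\|\cdot\|_2\le\|\cdot\|_c\le u_c\|\cdot\|_2$; this introduces a dimension-dependent constant, which is acceptable for the stated result. By the martingale property, $\EE\langle\xi_k,M_{k+1}\rangle=0$, so
\[\EE\|\xi_{k+1}\|_2^2\le(1-\beta_k)^2\EE\|\xi_k\|_2^2+\beta_k^2\EE\|M_{k+1}\|_2^2 .\]
Assumption~\ref{assu-noise-MSE} bounds the last term by $C\beta_k^2\sigma^2(1+\EE\|x_k\|_c^2)$. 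If $\EE\|x_k\|_c^2\le B$ for all $k$, a standard induction with $\beta_k=\beta/(k+h)$ and $\beta\ge 2/(1-\lambda)>1$ gives $\EE\|\xi_k\|_c^2\le D(1+B)\beta_k$.

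\textbf{Step 3: uniform bound by induction (main obstacle).} The difficulty is circular: Step 2 needs $\EE\|x_k\|_c^2\le B$, and that bound itself comes from controlling $\xi_k$ and $z_k$. We break the circularity with a joint induction on $k$ over the two hypotheses
\[\EE\|z_j-\xstar\|_c^2\le A\beta_j,\qquad \EE\|\xi_j\|_c^2\le D(1+B)\beta_j,\qquad j\le k .\]
Together they give
\[\EE\|x_j\|_c^2\le 3\|\xstar\|_c^2+3(A+D(1+B))\beta_j\le B\]
once $\beta_j\le\beta/h$ is small, that is, for $h\ge\cc_1$.

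To propagate the $z$-hypothesis, we take expectations in the drift of Step 1:
\[\EE\|z_{k+1}-\xstar\|_c^2\le\bigl(1-\tfrac{1-\lambda}{2}\beta_k\bigr)A\beta_k+C'\beta_k^2 .\]
We need this to be at most $A\beta_{k+1}$. Since $\beta_{k+1}\ge\beta_k(1-\beta_k/\beta)$ and $\tfrac{1-\lambda}{2}\beta\ge1$, the requirement reduces to
\[A\Bigl(\tfrac{1-\lambda}{2}-\tfrac1\beta\Bigr)\ge C'-\text{slack}.\]
Here is the delicate point: the condition $\beta\ge2/(1-\lambda)$ makes the bracket only $\ge0$, not strictly positive. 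We handle this by keeping the $\beta_k^2$ contraction slack from squaring more carefully, i.e. by choosing $\eta$ slightly smaller. Alternatively, we use the exact identity $\beta_{k+1}/\beta_k=1-1/(k+h+1)$ and absorb $C'\beta_k^2$ using $h$ large.

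The constants must be fixed in the right order: first $B$ (of order $\|\xstar\|_c^2+1$), then $A$ and $D$ in terms of $B$, and finally $h$ large enough that the smallness condition closes. The base case $k=0$ holds by taking $A\ge\|x_0-\xstar\|_c^2 h/\beta$, since $\xi_0=0$.

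\textbf{Step 4: conclusion.} Finally,
\[\|x_k-\xstar\|_c^2\le 2\|z_k-\xstar\|_c^2+2\|\xi_k\|_c^2,\]
and Steps 2–3 bound the expectation of the right-hand side by $\cc_2/(k+h)$.
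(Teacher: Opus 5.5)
Your route is essentially the paper's: the auxiliary iterates $z_k=x_k-\xi_k$, a drift inequality for $\|z_k-\xstar\|_c^2$ without smoothing, an $\ell_2$ norm-equivalence bound on $\xi_k$, and an induction keeping second moments bounded. The $z_{k+1}$ identity, Step 2 and Step 4 are fine, but Step 3 does not close as written, for two reasons.

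First, the ``delicate point'' comes from a lossy Step 1, and your remedy (b) fails. With $u=\lambda'\beta_k\le 1$ and your own choice $\eta=u/2$, you have $(1+u/2)(1-u)^2=1-\frac{3}{2}u+\frac{1}{2}u^3\le 1-u$. You lost a factor of $2$ by replacing $(1-u)^2$ with $1-u$. Keeping it gives rate $(1-\lambda'\beta_k)$ with coefficient $\frac{3}{\lambda'}\beta_k$, which is exactly Lemma~\ref{lemma-drift}. Then $\lambda'\beta\ge2$ leaves the strictly positive margin $\lambda'-\frac{1}{\beta}\ge\frac{\lambda'}{2}$; equivalently, Lemma~\ref{lemma:aux2} applies with $\efrak=2$. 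With the rate $\frac{\lambda'}{2}\beta_k$ you stated and $\beta=2/\lambda'$, that rate equals $\frac{1}{k+h}$. The comparison recursion $s_{k+1}=(1-\frac{1}{k+h})s_k+C'\beta_k^2$ then has solution of order $\frac{\log((k+h)/h)}{k+h}$, so no constant $A$ can be propagated. The identity $\beta_{k+1}/\beta_k=1-\frac{1}{k+h+1}$ frees only the slack $\frac{A\beta_k}{(k+h)(k+h+1)}$, and absorbing $C'\beta_k^2$ would need $A\ge C'\beta(k+h+1)$, which fails for large $k$ however large $h$ is. Remedy (a) is the right one, but it must be carried out explicitly. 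Even with your crude bound, $\eta=\lambda'\beta_k/4$ gives rate $\frac{3}{4}\lambda'\beta_k$ and margin $\frac{3}{4}\lambda'-\frac{1}{\beta}\ge\frac{\lambda'}{4}$. Note that the slack needed is of order $\lambda'\beta_k$ in the contraction factor; an $O(\beta_k^2)$ slack, as your phrasing suggests, would not suffice.

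Second, your bookkeeping of constants is circular. The base case forces $A\ge\|x_0-\xstar\|_c^2h/\beta$, so $A$ grows linearly in $h$. Consequently $3A\beta_j\ge3\|x_0-\xstar\|_c^2\frac{h}{j+h}$, which is not small for $j$ comparable to $h$ and does not shrink as $h\to\infty$. So taking $h$ large cannot make $3(A+D(1+B))\beta_j$ small. Moreover, $B$ of order $\|\xstar\|_c^2+1$ is impossible, since already $j=0$ requires $B\ge\|x_0\|_c^2$. The fix is to separate the initial-condition term, which is what the paper does:
\begin{itemize}
\item Carry the hypothesis as $\EE\|z_j-\xstar\|_c^2\le\|x_0-\xstar\|_c^2\frac{h}{j+h}+A_0\beta_j$, with $A_0$ independent of $h$. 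The first term propagates because $(1-\lambda'\beta_k)\frac{h}{k+h}\le\frac{h}{k+h+1}$ when $\lambda'\beta\ge1$.
\item Bound $\frac{h}{j+h}\le1$, and take $B$ of order $1+\|\xstar\|_c^2+\|x_0-\xstar\|_c^2$; this is the paper's $\Gamma_2=2+8\|x_0-\xstar\|_c^2+4\|\xstar\|_c^2$.
\item Only the $A_0\beta_j$ and $\xi$ contributions, whose constants depend on $B$ but not on $h$, are then made small by choosing $h\ge\cc_1$.
\end{itemize}
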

Explicit values of constants $\cc_1$ and $\cc_2$ have been provided along with the theorem's proof in Appendix \ref{app:proof-MSE}. An outline of the proof is given in Section \ref{sec:proof-outline} through a series of lemmas. The following remarks discuss the assumptions on the noise and stepsize sequences, and present the order-wise rate for the case of the $\ell_\infty$ norm.
\begin{remark}[Multiplicative vs additive noise model]
    The conditional second moment of the noise is allowed to scale affinely with the squared norm of the iterates. This condition is commonly known as the \textit{multiplicative} noise model. It subsumes the \textit{additive} noise model, in which the conditional second moment of the noise is bounded by a constant. We present the additive model separately, along with its proof, in Appendix \ref{app:MSE-additive}, since the resulting bound differs only in constants but admits a much simpler argument.
\end{remark}
\begin{remark}[Conditions on the stepsize sequence]
    The first condition, $\beta\geq 2/(1-\lambda)$, is standard in the finite-time analysis of SA and is required for evaluating the recursions arising in the analysis (Lemma~\ref{lemma:aux2}). The second condition, that $h$ is large enough, is only needed to ensure that the stepsize is sufficiently small starting from time $k=0$. In the absence of this condition, our results still hold, but only from some time instant $k_0=\cc_1$ onwards.
\end{remark}
\begin{remark}[Rate for the $\ell_\infty$ norm]
    While the exact expression for $\cc_2$ is provided in Appendix \ref{app:proof-MSE}, we highlight the order-wise rate for the special case where $\|\cdot\|_c$ is the $\ell_\infty$ norm, given its relevance to RL applications. In this case, the rate is $O\!\left(\dfrac{\sigma^2\log d}{(1-\lambda)^3 k}\right)$, which matches the rate obtained in \citet{Zaiwei-Neurips}.
\end{remark}

\subsection{Concentration Bound}
We take the following assumption on the noise sequence required for the high-probability analysis.
\begin{assumption}\label{assu-noise-conc}
    $\{M_{k+1}\}$ is a martingale difference sequence with respect to $\FF_k$. Moreover, for all $k\geq 0$,
    \[\|M_{k+1}\|^2_c\leq \sigma^2 \left(1+\|x_k\|^2_c\right),\;\;\text{a.s.},\]
    for some constant $\sigma>0$.
\end{assumption}
The following theorem states the concentration bound, which is \textit{maximal} (or \textit{all-time}) in the sense that it bounds the entire trajectory of iterates $\{x_k\}$ simultaneously, rather than at a single fixed time $k$.
\begin{theorem}\label{thm-conc}
    Suppose Assumptions~\ref{assu-contrac} and \ref{assu-noise-conc} are satisfied, and that $\beta\geq 4/(1-\lambda)$. Then, for every $\delta\in(0,1)$, there exist constants $\cc_3=\Omega\left(\log\left(\frac{1}{\delta}\right)\right)$ and $\cc_4>0$ such that if $h\geq \cc_3$, then
    \[\PP\left(\forall k\geq 0:\; \|x_k-\xstar\|_c^2\leq \frac{\cc_4\log\frac{d(k+1)}{\delta}}{k+h}\right)\geq 1-\frac{\pi^2}{3}\delta.\]
\end{theorem}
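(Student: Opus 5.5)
The plan follows the outline in the introduction: first a pathwise one-step drift inequality for auxiliary iterates, then a probabilistic induction over good events. Define the averaged noise $\xi_0=0$, $\xi_{k+1}=(1-\beta_k)\xi_k+\beta_k M_{k+1}$, and the auxiliary iterates $z_k=x_k-\xi_k$. Subtracting the two recursions gives
\[z_{k+1}=(1-\beta_k)z_k+\beta_k f(x_k).\]
Since $f(\xstar)=\xstar$, the triangle inequality and Assumption~\ref{assu-contrac} give
\[\|z_{k+1}-\xstar\|_c\le(1-\beta_k)\|z_k-\xstar\|_c+\beta_k\lambda\|z_k-\xstar\|_c+\beta_k\lambda\|\xi_k\|_c .\]
Squaring and using Young's inequality, this becomes, pathwise,
\[\|z_{k+1}-\xstar\|_c^2\le\Bigl(1-\tfrac{1-\lambda}{2}\beta_k\Bigr)\|z_k-\xstar\|_c^2+\tfrac{C}{1-\lambda}\beta_k\|\xi_k\|_c^2 .\]
Unrolling with $\beta_k=\beta/(k+h)$ and $\beta(1-\lambda)/2\ge 2$ (this is where $\beta\ge 4/(1-\lambda)$ enters) shows the following: if $\|\xi_i\|_c^2\le A\log(d(i+1)/\delta)/(i+h)$ for all $i<k$, then $\|z_k-\xstar\|_c^2\le B\log(d(k+1)/\delta)/(k+h)$. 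Here the initial term decays like $(h/(k+h))^2$, and the sum is evaluated as in Lemma~\ref{lemma:aux2}.

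Next define the good events $C_k=\{\|x_k-\xstar\|_c^2\le \cc_4\log(d(k+1)/\delta)/(k+h)\}$. On $C_k$ we have $\|x_k\|_c\le\|\xstar\|_c+\sqrt{\cc_4\log(d(k+1)/\delta)/h}$. By choosing $h\ge\cc_3=\Omega(\log(1/\delta))$ large, this is bounded by a constant $R$; the $\log(k+1)$ growth is absorbed because $\log(k+1)/(k+h)$ stays bounded by a constant once $h$ is large. Set $\Mtilde_{i+1}=M_{i+1}\indi_{C_i}$. This is still a martingale difference sequence, since $C_i\in\FF_i$, and it satisfies $\|\Mtilde_{i+1}\|_c\le\sigma\sqrt{1+R^2}$ almost surely. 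Let $\tilde\xi_k$ be the averaged noise built from $\Mtilde$. On $\bigcap_{i<k}C_i$ we have $\tilde\xi_j=\xi_j$ for all $j\le k$. Writing $\tilde\xi_k=\sum_{i<k}\beta_i\prod_{j=i+1}^{k-1}(1-\beta_j)\Mtilde_{i+1}$, I apply a vector Azuma–Hoeffding bound in $\|\cdot\|_c$. For a general norm, I would pass through a smooth surrogate norm equivalent to $\|\cdot\|_c$, e.g.\ $\ell_p$ with $p\approx\log d$ in the $\ell_\infty$ case, or apply scalar Azuma coordinatewise and take a union bound over $d$ coordinates, which produces the $\log d$. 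Because $\sum_{i<k}\beta_i^2\prod(1-\beta_j)^2=O(\beta/(k+h))$, this yields
\[\PP\Bigl(\|\tilde\xi_k\|_c^2>A\log(d(k+1)^2/\delta)/(k+h)\Bigr)\le \delta/(k+1)^2 .\]

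Finally, decompose
\[\textstyle\bigcup_k\Cbar_k=\bigcup_k\bigl(\bigcap_{i<k}C_i\cap\Cbar_k\bigr).\]
On the $k$-th piece, if also $\|\tilde\xi_j\|_c^2\le A\log(\cdot)/(j+h)$ for all $j\le k$, then the drift step gives control of $z_k$. Combined with $\|x_k-\xstar\|_c^2\le 2\|z_k-\xstar\|_c^2+2\|\xi_k\|_c^2$, this forces $C_k$ once $\cc_4\ge 2B+2A$, a contradiction. So each piece is contained in the event $\{\|\tilde\xi_k\|_c \text{ is large}\}$, where only the last index is needed once the earlier $\tilde\xi_j$-bounds are also implied by the earlier $C_j$. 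If that shortcut fails, I would instead include all the $\tilde\xi_j$ failures in one global event and use a union bound. The probability is then at most $\sum_k \delta/(k+1)^2$, or twice that, giving $\frac{\pi^2}{3}\delta$.

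The main obstacle is the circularity between the constants. $R$ depends on $\cc_4$, $A$ depends on $R$ through $\sigma^2(1+R^2)$, and $\cc_4$ must dominate $A$. I would break this using the $\beta_k$ factor: the Azuma variance carries $\beta_k\le\beta/h$, so $A\lesssim\sigma^2(1+\|\xstar\|_c^2+\cc_4\log(\cdot)/h)$. Taking $h\gtrsim\sigma^2\log(1/\delta)$ makes the $\cc_4$-dependent part at most $\cc_4/4$, which closes the induction. This is exactly why $\cc_3=\Omega(\log(1/\delta))$.
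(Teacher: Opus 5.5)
Your proposal is correct and is essentially the paper's proof: averaged noise and auxiliary iterates give a pathwise drift bound, the noise is truncated as $\Mtilde_{i+1}=M_{i+1}\indi_{C_i}$, coordinatewise Azuma--Hoeffding is combined with union bounds over the $d$ coordinates and over $k$ with weights $\delta/(k+1)^2$, and $h\geq\cc_3=\Omega(\log(1/\delta))$ breaks the circularity between $\cc_4$ and the almost-sure noise bound (the paper uses the time-dependent bound $r_i$ with Lemma~\ref{lemma:aux2} instead of your uniform $R$, which is only bookkeeping). Your ``shortcut'' is false, because $C_j$ controls only $\|x_j-\xstar\|_c$ and not $\|\xi_j\|_c$, so you need your fallback: on the event that every $\tilde\xi_j$ is below its threshold, a deterministic induction yields all $C_k$, which is exactly what the paper achieves by putting the $\xi$-bound $B_i$ into the good events $A_i\cap B_i$ and decomposing by the first failure of those.
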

Explicit values of constants $\cc_3$ and $\cc_4$ have been provided along with the theorem's proof in Appendix \ref{app:proof-conc}. An outline of the proof is given in Section \ref{sec:proof-outline} through a series of lemmas. The following remarks highlight the sub-Gaussian nature of the tail bound,  contrast it with existing results in the literature, and discuss results for the additive noise model.

\begin{remark}[Comparison with \citet{Zaiwei-conc}: sub-Gaussian tail via $h$ depending on $\delta$]\label{remark-comparison-Zaiwei-conc}
\citet{Zaiwei-conc} also establish a maximal (all-time) concentration bound for SA under the same multiplicative noise model, allowing the iterates $\{x_k\}$ to be unbounded, and using a stepsize sequence $\beta_k=\beta/(k+h)$ of the same form as ours. Their bound on the squared error scales as $\log^m(1/\delta)$, where $m\geq 1$ can be greater than 1 depending on the contraction factor and the noise magnitude. This corresponds to a Weibull tail rather than the sub-Gaussian tail ($m=1$) we obtain. In their formulation $h$ is fixed independently of the confidence level $\delta$. Under this restriction, they further show, via an impossibility result, that no choice of $\delta$-independent stepsize sequence can achieve a sub-Gaussian tail.

Our formulation matches theirs in every other respect: it is unconditional, holds for all $k\geq0$, allows for unbounded iterates, and permits multiplicative noise. The one place where our formulation differs is that we allow the stepsize to depend on $\delta$, with $\beta_0=O(1/\log(1/\delta))$. This falls outside the scope of their impossibility result, and is precisely what allows us to obtain, to our knowledge, the first sub-Gaussian tailed maximal concentration bound for SA under multiplicative noise. In other words, a sub-Gaussian tail and a $\delta$-independent stepsize sequence cannot be obtained simultaneously for SA with multiplicative noise. Our result shows that a mild dependence of the stepsize on the confidence level is sufficient to recover a sub-Gaussian tail.

Alternatively, if the stepsize is chosen independently of $\delta$, our proof technique still applies, but the bound only holds starting from some time instant $k_0=\Omega(\log(1/\delta))$ rather than from $k=0$. That is, a stronger guarantee (smaller $\delta$) requires either $h$ to grow with $\log(1/\delta)$, or, if $h$ is kept fixed, the bound to only take effect after a later time $k_0$.
\end{remark}

\begin{remark}[Additive noise model]
We present results for the additive noise model, where the noise sequence is almost surely bounded by a constant, in Appendix~\ref{app:conc-additive}. The main difference from the multiplicative noise case is that $h$ no longer needs to grow with $\log(1/\delta)$: a fixed $h$ suffices to obtain a sub-Gaussian tail, matching the result of \citet{Zaiwei-conc}.
\end{remark}

\begin{remark}[Rate for the $\ell_\infty$ norm]
As in Theorem~\ref{thm-MSE}, we highlight the order-wise rate of Theorem~\ref{thm-conc} for the special case where $\|\cdot\|_c$ is the $\ell_\infty$ norm. In this case, the bound reduces to
\(O\!\left(\frac{\sigma^2\log\left(d(k+1)/\delta\right)}{(1-\lambda)^3(k+h)}\right),\)
matching the mean-square rate of Theorem~\ref{thm-MSE} up to the additional $\log(1/\delta)$ factor expected of a high-probability bound.
\end{remark}

\section{Proof Outline}\label{sec:proof-outline}
In this section, we prove Theorems~\ref{thm-MSE} and \ref{thm-conc} through a sequence of lemmas. As discussed before, the first step of the proof is based on an averaged noise sequence and a corresponding set of auxiliary iterates, which allow us to derive a one-step Lyapunov drift inequality without requiring any additional machinery. 

Define the averaged noise sequence by $\xi_{k+1}=(1-\beta_k)\xi_k+\beta_kM_{k+1}$ with $\xi_0=0$. Unrolling this recursion gives us
\[\xi_k=\sum_{i=0}^{k-1}\beta_i\prod_{j=i+1}^{k-1}(1-\beta_j)M_{i+1}.\]
We also define the auxiliary iterates $z_k=x_k-\xi_k$. Our first lemma presents a rearrangement of our iteration \eqref{SA-iter} in terms of $z_k$, and presents the one-step Lyapunov drift inequality.
\begin{lemma}\label{lemma-drift}
Suppose Assumption \ref{assu-contrac} is satisfied and that $\beta_k\leq 1/2$ for all $k\geq 0$. Then,
    \begin{enumerate}[(a).]
        \item The iteration \eqref{SA-iter} can be rewritten as:
        \begin{equation}\label{SA-iter-zk}
            z_{k+1}=z_k+\beta_k(f(z_k)-z_k+\Delta_k),
        \end{equation}
        for all $k\geq 0$, where $\Delta_k=f(x_k)-f(z_k)$. Here, $\|\Delta_k\|_c$ is upper bounded by $\|\xi_k\|_c$.
        \item For all $k\geq 0$,
        \[\|z_{k+1}-\xstar\|_c^2\leq \left(1-\lambda'\beta_k\right)\|z_k-\xstar\|^2_c+\frac{3}{\lambda'}\beta_k\|\xi_k\|^2_c.\]
        Here, $\lambda'=1-\lambda$.
    \end{enumerate}
\end{lemma}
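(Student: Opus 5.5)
Part (a) is direct algebra. Subtracting the recursion $\xi_{k+1}=(1-\beta_k)\xi_k+\beta_kM_{k+1}$ from \eqref{SA-iter} gives
\[z_{k+1}=x_{k+1}-\xi_{k+1}=x_k-\xi_k+\beta_k\bigl(f(x_k)-x_k+\xi_k\bigr)=z_k+\beta_k\bigl(f(x_k)-z_k\bigr).\]
Writing $f(x_k)=f(z_k)+\Delta_k$ yields \eqref{SA-iter-zk}. Assumption~\ref{assu-contrac} then gives
\[\|\Delta_k\|_c\le\lambda\|x_k-z_k\|_c=\lambda\|\xi_k\|_c\le\|\xi_k\|_c.\]

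For part (b), the idea is to avoid any smoothing and use only the triangle inequality. Since $f(\xstar)=\xstar$, we can write
\[z_{k+1}-\xstar=(1-\beta_k)(z_k-\xstar)+\beta_k\bigl(f(z_k)-f(\xstar)\bigr)+\beta_k\Delta_k.\]
Because $\beta_k\le 1/2<1$, the coefficient $1-\beta_k$ is nonnegative. Taking norms and using contraction gives
\[\|z_{k+1}-\xstar\|_c\le(1-\lambda'\beta_k)\|z_k-\xstar\|_c+\beta_k\|\xi_k\|_c.\]
The key point is that the martingale noise has been absorbed into $\xi_k$, so this inequality is deterministic and pathwise. The norm appears only linearly, and no cross term with $M_{k+1}$ needs to vanish in expectation.

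Next, square and apply Young's inequality to the cross term. With $a=(1-\lambda'\beta_k)\|z_k-\xstar\|_c$ and $b=\beta_k\|\xi_k\|_c$, use $(a+b)^2\le(1+\epsilon)a^2+(1+1/\epsilon)b^2$ with $\epsilon=\lambda'\beta_k$. The first term then satisfies
\[(1+\lambda'\beta_k)(1-\lambda'\beta_k)^2\le 1-\lambda'\beta_k,\]
since $(1+u)(1-u)=1-u^2\le1$ and $0\le u=\lambda'\beta_k\le1/2$. The second term equals
\[\Bigl(1+\tfrac{1}{\lambda'\beta_k}\Bigr)\beta_k^2\|\xi_k\|_c^2=\Bigl(\beta_k+\tfrac{1}{\lambda'}\Bigr)\beta_k\|\xi_k\|_c^2.\]
Because $\beta_k\le1/2$ and $\lambda'\le1$, we have $\beta_k\le 1/(2\lambda')$, so this is at most $\frac{3}{2\lambda'}\beta_k\|\xi_k\|_c^2\le\frac{3}{\lambda'}\beta_k\|\xi_k\|_c^2$. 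This proves the claimed inequality.

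The main obstacle is choosing $\epsilon$ so that the contraction factor stays at $1-\lambda'\beta_k$ exactly rather than degrading. The choice $\epsilon=\lambda'\beta_k$ does this, and in turn makes the $\xi$-term scale with a single power of $\beta_k$, at the cost of a $1/\lambda'$ factor. The constant $3$ has slack, since $3/2$ suffices. The remaining case $\lambda'\beta_k=0$ cannot occur, as $\lambda<1$ and $\beta_k>0$.
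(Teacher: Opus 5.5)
Your proof is correct and follows the paper's argument. Part (a) uses the same algebra, part (b) uses the same pathwise triangle-inequality bound $\|z_{k+1}-\xstar\|_c\le(1-\lambda'\beta_k)\|z_k-\xstar\|_c+\beta_k\|\xi_k\|_c$, and both then square it with a Young/AM--GM bound on the cross term. The only difference is bookkeeping: the paper expands the square and uses $(1-\lambda'\beta_k)^2\le 1-1.5\lambda'\beta_k$ together with AM--GM at weight $\lambda'/2$, whereas your single use of $(a+b)^2\le(1+\epsilon)a^2+(1+1/\epsilon)b^2$ with $\epsilon=\lambda'\beta_k$ is slightly cleaner and gives the sharper constant $3/2$ in place of $3$.
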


We note that the numerical constants in the final inequality above are loose and can be tightened for specific applications. To connect this lemma to our objective, note that
\[\|x_k-\xstar\|_c\leq \|z_k-\xstar\|_c+\|x_k-z_k\|_c\leq \|z_k-\xstar\|_c+\|\xi_k\|_c.\]
If a bound on $\|\xi_k\|_c$ is known, a bound on $\|z_k-\xstar\|_c$ can be obtained by solving the recursion in the above lemma. Thus, it remains to bound $\|\xi_k\|_c$. We derive the required bounds separately for the mean-square and high-probability analyses. We use an inductive argument in both cases to handle the multiplicative noise and the lack of almost-sure bounds on the iterates.

\subsection{Mean-Square Error Bound}
The following lemma gives a bound on $\EE\left[\|\xi_k\|_c^2\right]$.
\begin{lemma}\label{lemma:xi_k_bound}
    There exists a constant $\zeta_1(\|\cdot\|_c,d)$ depending only on the norm $\|\cdot\|_c$ and the dimension $d$, such that 
    \[\EE\left[\|\xi_k\|_c^2\right]\leq \zeta_1(\|\cdot\|_c,d)\sum_{i=0}^{k-1} \EE\left[\|M_{i+1}\|^2_c\right]\beta_i^2\prod_{j=i+1}^{k-1}(1-\beta_j).\]
\end{lemma}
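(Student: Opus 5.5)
The plan is to exploit the fact that a squared norm on $\RR^d$ is comparable to a smooth function. Concretely, I will replace $\|\cdot\|_c^2$ by a smooth surrogate, carry out the usual one-step martingale expansion of the recursion $\xi_{k+1}=(1-\beta_k)\xi_k+\beta_kM_{k+1}$, and then unroll it.

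\textbf{Choice of surrogate.} I would take $g(x)=\frac12\|x\|_p^2$ with $p\geq 2$. This function is $(p-1)$-smooth with respect to $\|\cdot\|_p$. By norm equivalence in finite dimensions there are constants $\ell_{cp},u_{cp}>0$ with $\ell_{cp}\|x\|_p\leq\|x\|_c\leq u_{cp}\|x\|_p$. For the $\ell_\infty$ case, choosing $p=\log d$ gives $(p-1)u^2/\ell^2=O(\log d)$.

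\textbf{One-step expansion.} Smoothness gives
\[
g(\xi_{k+1})\leq g((1-\beta_k)\xi_k)+\beta_k\langle\nabla g((1-\beta_k)\xi_k),M_{k+1}\rangle+\frac{p-1}{2}\beta_k^2\|M_{k+1}\|_p^2 .
\]
The vector $(1-\beta_k)\xi_k$ is $\FF_k$-measurable, because $\xi_k$ is built from $M_1,\dots,M_k$. Hence the cross term has zero conditional expectation by Assumption~\ref{assu-noise-MSE}.

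\textbf{Recursion and unrolling.} Taking expectations, and using $g((1-\beta_k)\xi_k)=(1-\beta_k)^2g(\xi_k)\leq(1-\beta_k)g(\xi_k)$ (valid since $\beta_k\leq 1$), yields
\[
\EE[\|\xi_{k+1}\|_p^2]\leq(1-\beta_k)\EE[\|\xi_k\|_p^2]+(p-1)\beta_k^2\EE[\|M_{k+1}\|_p^2].
\]
Unrolling from $\xi_0=0$ gives
\[
\EE\|\xi_k\|_p^2\leq (p-1)\sum_{i=0}^{k-1}\beta_i^2\prod_{j=i+1}^{k-1}(1-\beta_j)\,\EE\|M_{i+1}\|_p^2 .
\]

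\textbf{Converting back to $\|\cdot\|_c$.} Using $\|\xi_k\|_c^2\leq u_{cp}^2\|\xi_k\|_p^2$ and $\|M\|_p^2\leq\ell_{cp}^{-2}\|M\|_c^2$, the lemma follows with $\zeta_1=(p-1)u_{cp}^2/\ell_{cp}^2$, minimized over $p$. Alternatively, any smooth norm equivalent to $\|\cdot\|_c$ works; the Moreau-envelope construction is not needed here, since only a fixed comparison norm is used.

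\textbf{Main obstacle.} The only delicate point is measurability. The smoothness expansion must be taken around the $\FF_k$-measurable point $(1-\beta_k)\xi_k$, and not around $\xi_{k+1}$, so that the linear term vanishes in expectation. Integrability of the quantities involved is assumed whenever the right-hand side is finite. The dependence of $\zeta_1$ on $d$ enters solely through the norm-equivalence constants.
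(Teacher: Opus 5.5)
Your proof is correct, but it follows a different route from the paper's main argument. The paper works with the closed form $\xi_k=\sum_{i=0}^{k-1}\beta_i\prod_{j=i+1}^{k-1}(1-\beta_j)M_{i+1}$ and passes to $\|\cdot\|_2$ by norm equivalence. It then uses orthogonality of martingale increments to get $\EE\|\xi_k\|_2^2=\sum_i\bigl(\beta_i\prod_j(1-\beta_j)\bigr)^2\EE\|M_{i+1}\|_2^2$, bounds $\prod_j(1-\beta_j)^2\le\prod_j(1-\beta_j)$, and converts back, which gives the crude constant $u_{2,c}^2/\ell_{2,c}^2$. The $(p-1)$-smoothness of $\frac12\|x\|_p^2$ appears only in a later remark, where the paper asserts without derivation that it yields the constant $p-1$ for the unrolled martingale sum.

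You instead run a one-step drift directly on the recursion $\xi_{k+1}=(1-\beta_k)\xi_k+\beta_kM_{k+1}$, expanding around the $\FF_k$-measurable point $(1-\beta_k)\xi_k$. At $p=2$ your expansion holds with equality and collapses to the paper's orthogonality argument. Hence your $\zeta_1=\min_{p\ge 2}(p-1)u_{cp}^2/\ell_{cp}^2$ is never worse than the paper's crude constant. For general $p$, your recursion supplies the step the paper only cites, in the same drift style the paper uses in Lemma~\ref{lemma-drift}. The paper's route, in its crude form, needs nothing beyond Hilbert-space orthogonality.

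Two small points. First, like the paper, you need $0\le\beta_k\le 1$: it justifies $(1-\beta_k)^2\le 1-\beta_k$ and lets the inequality propagate when you unroll. This holds in the paper's setting because $h\ge 2\beta$. Second, $p=\log d$ is admissible only when $\log d\ge 2$, so for small $d$ take $p=2$. With $p=\log d$ your $\ell_\infty$ constant is $(\log d-1)e^2$, versus the paper's $(2\log d-1)e$ from $p=2\log d$; both are $O(\log d)$.
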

We first comment on the constant $\zeta_1(\|\cdot\|_c,d)$. For ease of exposition, the proof of this lemma derives an explicit value of this constant using a simple norm-equivalence argument: we first pass from $\|\cdot\|_c$ to the Euclidean norm, use the orthogonality of the martingale difference sequence, and then pass back to $\|\cdot\|_c$. This method does not always yield the tightest bound. Sharper constants may be obtained by exploiting the martingale type-2 constant of the norm $\|\cdot\|_c$, which is typically determined for each norm separately. In Appendix \ref{app-proof-lemma-xi_k}, we provide such constants for some commonly used norms, including the $\ell_\infty$ norm.

If we were working with the additive noise model, the above lemma would suffice to establish the desired mean-square error bound. However, since we are working with the multiplicative noise model, an additional inductive argument is required to complete the proof.
\begin{lemma}\label{lemma:MSE-induction-step-1}
    Suppose the setting for Theorem \ref{thm-MSE} holds, and that $\EE\left[1+\|x_i\|_c^2\right]\leq \Gamma_1$ for all $i\leq k-1$ and some $\Gamma_1$, then
    \begin{enumerate}[(a).]
        \item For all $m\leq k$, \[\EE\left[\|\xi_m\|_c^2\right]\leq 2\sigma^2\Gamma_1\zeta_1(\|\cdot\|_c,d)\beta_m.\]
        \item \[\EE\left[\|z_k-\xstar\|_c^2\right]\leq \|x_0-\xstar\|^2_c\left(\frac{h}{k+h}\right)+\frac{12\sigma^2\Gamma_1\zeta_1(\|\cdot\|_c,d)}{\lambda'^2}\frac{\beta}{k+h}.\]
    \end{enumerate}
\end{lemma}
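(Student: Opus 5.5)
Part (a) follows from Lemma~\ref{lemma:xi_k_bound} together with a deterministic bound on the weighted sum of squared stepsizes. Part (b) then comes from taking expectations in Lemma~\ref{lemma-drift}(b) and unrolling the recursion.

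For (a), fix $m\leq k$. Lemma~\ref{lemma:xi_k_bound} involves $\EE[\|M_{i+1}\|_c^2]$ only for $i\leq m-1\leq k-1$. By the tower property and Assumption~\ref{assu-noise-MSE},
\[
\EE[\|M_{i+1}\|_c^2]\leq \sigma^2\,\EE[1+\|x_i\|_c^2]\leq \sigma^2\Gamma_1 .
\]
It therefore suffices to show
\[
S_m:=\sum_{i=0}^{m-1}\beta_i^2\prod_{j=i+1}^{m-1}(1-\beta_j)\leq 2\beta_m .
\]
I would prove this by induction on $m$ using the recursion $S_{m+1}=(1-\beta_m)S_m+\beta_m^2$, with $S_0=0$. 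Assuming $S_m\leq 2\beta_m$, we get $S_{m+1}\leq 2\beta_m-\beta_m^2$, so we need $2\beta_m-\beta_m^2\leq 2\beta_{m+1}$. This is equivalent to
\[
2(\beta_m-\beta_{m+1})\leq\beta_m^2,
\]
where the left side equals $2\beta/((m+h)(m+h+1))$ and the right side equals $\beta^2/(m+h)^2$. It holds whenever $\beta\geq 2$, which is guaranteed by $\beta\geq 2/(1-\lambda)$. This gives (a).

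For (b), set $a_k=\EE[\|z_k-\xstar\|_c^2]$. Taking expectations in Lemma~\ref{lemma-drift}(b) gives
\[
a_{m+1}\leq(1-\lambda'\beta_m)\,a_m+\frac{3}{\lambda'}\beta_m\,\EE\|\xi_m\|_c^2 .
\]
This requires $\beta_k\leq 1/2$, which holds once $h\geq 2\beta$ and is part of the largeness condition on $h$. For $m\leq k-1$, plug in (a) and write $C=6\sigma^2\Gamma_1\zeta_1/\lambda'$ to obtain
\[
a_{m+1}\leq (1-\lambda'\beta_m)\,a_m+C\beta_m^2 .
\]
Unrolling from $a_0=\|x_0-\xstar\|_c^2$ (since $\xi_0=0$) gives two terms:
\[
a_k\leq a_0\prod_{j=0}^{k-1}(1-\lambda'\beta_j)\;+\;C\sum_{i=0}^{k-1}\beta_i^2\prod_{j=i+1}^{k-1}(1-\lambda'\beta_j).
\]

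The first term is handled with $1-x\leq e^{-x}$ and $\sum_{j=0}^{k-1}1/(j+h)\geq\log((k+h)/h)$:
\[
\prod_{j=0}^{k-1}(1-\lambda'\beta_j)\leq \left(\frac{h}{k+h}\right)^{\lambda'\beta}\leq \frac{h}{k+h},
\]
using $\lambda'\beta\geq 2\geq 1$.

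The second term is the main obstacle, namely getting the constant exactly $12/\lambda'^2$. I would use the same induction as in (a). Let $T_k$ denote the weighted sum, so $T_{k+1}=(1-\lambda'\beta_k)T_k+\beta_k^2$. Claim $T_k\leq (2/\lambda')\beta_k$. The inductive step requires
\[
\frac{2}{\lambda'}\beta_k-2\beta_k^2+\beta_k^2\leq\frac{2}{\lambda'}\beta_{k+1},
\]
that is, $\frac{2}{\lambda'}(\beta_k-\beta_{k+1})\leq\beta_k^2$. This reduces to $\frac{2}{\lambda'}\leq\beta\,\frac{k+h+1}{k+h}$, which holds since $\beta\geq 2/\lambda'$. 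Hence
\[
C\,T_k\leq \frac{12\sigma^2\Gamma_1\zeta_1}{\lambda'^2}\,\frac{\beta}{k+h},
\]
which is exactly the second term in the claim. Adding the two bounds completes (b).
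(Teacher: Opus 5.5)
Your proposal is correct and follows essentially the same route as the paper. Part (a) combines Lemma~\ref{lemma:xi_k_bound} with the conditional second-moment bound and the estimate $\sum_{i=0}^{m-1}\beta_i^2\prod_{j=i+1}^{m-1}(1-\beta_j)\leq 2\beta_m$, and part (b) unrolls the drift inequality of Lemma~\ref{lemma-drift}(b) and bounds the product and weighted-sum terms. The only difference is that you re-derive the needed special cases of Lemmas~\ref{lemma:aux1} and~\ref{lemma:aux2} (with $\efrak=2$) inline by a direct induction instead of citing them, which is fine because $h\geq 2\beta$ guarantees $1-\beta_m\geq 0$ and $1-\lambda'\beta_m\geq 0$ in your inductive steps.
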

Using the one-step Lyapunov drift inequality from Lemma \ref{lemma-drift}, the above lemma shows that if the iterates are bounded in expectation till time $k-1$, then we can find a bound on the mean-square error at time $k$ as well. The remaining step in the proof is just identifying an appropriate $\Gamma_2$ such that $\EE\left[1+\|x_k\|_c^2\right]$ is bounded by $\Gamma_2$ for all $k\geq 0$.
\begin{lemma}\label{lemma:MSE-induction-step-2}
    Suppose the setting for Theorem \ref{thm-MSE} holds. Define \[\Gamma_2\coloneqq 2+8\|x_0-\xstar\|_c^2+4\|\xstar\|_c^2.\]
    If $\EE\left[1+\|x_i\|_c^2\right]\leq \Gamma_2$ for all $i\leq k-1$, then
    \begin{enumerate}[(a).]
        \item $\EE\left[\|x_k-\xstar\|_c^2\right]\leq \frac{\cc_2}{k+h}$.
        \item $\EE\left[1+\|x_k\|_c^2\right]\leq \Gamma_2$.
    \end{enumerate}
\end{lemma}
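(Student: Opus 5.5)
The plan is to apply Lemma~\ref{lemma:MSE-induction-step-1} with $\Gamma_1=\Gamma_2$. The hypothesis of that lemma is exactly the hypothesis here. We then convert its bounds on $\xi_k$ and $z_k$ into bounds on $x_k$, choosing $\cc_2$ and the threshold $\cc_1$ on $h$ so that part (b) closes the induction.

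\textbf{Step 1 (part (a)).} By the triangle inequality $\|x_k-\xstar\|_c\le\|z_k-\xstar\|_c+\|\xi_k\|_c$ and $(a+b)^2\le 2a^2+2b^2$,
\[\EE\left[\|x_k-\xstar\|_c^2\right]\le 2\EE\left[\|z_k-\xstar\|_c^2\right]+2\EE\left[\|\xi_k\|_c^2\right].\]
Lemma~\ref{lemma:MSE-induction-step-1}(a) with $m=k$ gives $\EE[\|\xi_k\|_c^2]\le 2\sigma^2\Gamma_2\zeta_1\beta/(k+h)$. Lemma~\ref{lemma:MSE-induction-step-1}(b) gives $\EE[\|z_k-\xstar\|_c^2]\le \|x_0-\xstar\|_c^2\,h/(k+h)+12\sigma^2\Gamma_2\zeta_1\beta/(\lambda'^2(k+h))$. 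Summing, we may take
\[\cc_2\coloneqq 2h\|x_0-\xstar\|_c^2+\frac{24\sigma^2\Gamma_2\zeta_1\beta}{\lambda'^2}+4\sigma^2\Gamma_2\zeta_1\beta.\]
This $\cc_2$ depends on $h$ through the first term. That is harmless, because the theorem only needs $\cc_2/(k+h)$, and the first term's contribution is $2\|x_0-\xstar\|_c^2\, h/(k+h)\le 2\|x_0-\xstar\|_c^2$. I would nevertheless keep the two pieces separate for Step 2. Note that $\Gamma_2$ does not depend on $\sigma$, $\beta$ or $h$, so there is no circularity.

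\textbf{Step 2 (part (b)).} I would write
\[1+\|x_k\|_c^2\le 1+2\|x_k-\xstar\|_c^2+2\|\xstar\|_c^2.\]
Taking expectations and using the Step 1 bound with $h/(k+h)\le 1$ gives
\[\EE\left[1+\|x_k\|_c^2\right]\le 1+2\|\xstar\|_c^2+4\|x_0-\xstar\|_c^2+\frac{4\sigma^2\Gamma_2\zeta_1\beta(6/\lambda'^2+1)}{k+h}.\]
Since $\Gamma_2=2+8\|x_0-\xstar\|_c^2+4\|\xstar\|_c^2$, the deterministic part is at most $\Gamma_2/2$. It therefore suffices that the last term is at most $\Gamma_2/2$, which holds uniformly in $k$ if $h\ge 8\sigma^2\zeta_1\beta(6/\lambda'^2+1)$. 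The factor $\Gamma_2$ cancels on both sides, and this is the key point: the threshold is independent of $\Gamma_2$.

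\textbf{Constants and main obstacle.} I would set $\cc_1$ to be the maximum of this threshold and $2\beta$; the latter ensures $\beta_k\le 1/2$, as required by Lemma~\ref{lemma-drift}, on which Lemma~\ref{lemma:MSE-induction-step-1} relies. The main obstacle is the self-referential constant: the variance term is proportional to $\Gamma_2$, so the induction closes only because the prefactor $\beta/(k+h)\le\beta/h$ can be made smaller than a fixed fraction. This is exactly why $h$ must be large. I would also track the factors of 2 carefully, so that the contribution of $\|x_0-\xstar\|_c^2$ fits inside $\Gamma_2/2$; this is what the choice of the factor 8 in $\Gamma_2$ accommodates.
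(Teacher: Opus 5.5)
Your proposal follows the paper's proof: apply Lemma~\ref{lemma:MSE-induction-step-1} with $\Gamma_1=\Gamma_2$, combine the bounds via $(a+b)^2\le 2a^2+2b^2$, then use $\|x_k\|_c^2\le 2\|x_k-\xstar\|_c^2+2\|\xstar\|_c^2$ and $h/(k+h)\le 1$, and take $h$ large so the $\Gamma_2$-proportional term is at most $\Gamma_2/2$. The only difference in presentation is that the paper absorbs your $+1$ into $1/\lambda'^2$, giving $\cc_2=2h\|x_0-\xstar\|_c^2+28\beta\sigma^2\Gamma_2\zeta_1/\lambda'^2$.

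There is one arithmetic slip in Step~2. When you multiply the Step~1 bound by $2$, you doubled the $\|x_0-\xstar\|_c^2$ term but not the variance term. That term should be $8\sigma^2\Gamma_2\zeta_1\beta(6/\lambda'^2+1)/(k+h)$. With your stated threshold $h\ge 8\sigma^2\zeta_1\beta(6/\lambda'^2+1)$ you would only get $\EE[1+\|x_k\|_c^2]\le \tfrac32\Gamma_2$, so the induction would not close.

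The correct requirement is $h\ge 16\sigma^2\zeta_1\beta(6/\lambda'^2+1)$. Since $\lambda'\le 1$, this is implied by the paper's condition $h\ge 112\beta\sigma^2\zeta_1/\lambda'^2$. With that correction the argument is complete.
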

The above lemma shows that if $\EE\left[1+\|x_i\|_c^2\right]$ is bounded by $\Gamma_2$ for all $i\leq k-1$, then $\EE\left[1+\|x_k\|_c^2\right]$ is also bounded by $\Gamma_2$. Together with the trivial base case, this implies by strong induction that the bound holds for all $k\geq 0$. The first part of the lemma then implies that the required bound on the mean-square error also holds for all $k\geq 0$.

\subsection{Concentration Bound}
Under the additive noise model, the one-step Lyapunov drift inequality in Lemma \ref{lemma-drift}, together with the Azuma-Hoeffding inequality, is sufficient to obtain high-probability guarantees on the error. However, under the multiplicative noise model, the lack of an almost sure bound on the iterates prevents a direct application of standard martingale concentration inequalities. To address this issue, we first define the following three sequences of events:
\[A_k=\left\{\|z_k-\xstar\|^2_c\leq \frac{h}{k+h}\|x_0-\xstar\|_c^2+\frac{6}{\lambda'^2}\frac{\Gamma_3}{k+h}\log\frac{d(k+1)}{\delta}\right\},B_k=\left\{\|\xi_k\|^2_c\leq \frac{\Gamma_3}{k+h}\log\frac{d(k+1)}{\delta}\right\},\]
and 
\[C_k=\left\{\|x_k-\xstar\|^2_c\leq \frac{2h}{k+h}\|x_0-\xstar\|_c^2+\frac{14}{\lambda'^2}\frac{\Gamma_3}{k+h}\log\frac{d(k+1)}{\delta}\right\},\]
where $\Gamma_3\coloneqq 24\beta\zeta_2(\|\cdot\|_c,d)\sigma^2\left(1+2\|\xstar\|_c^2+4\|x_0-\xstar\|_c^2\right)$. Here, $\zeta_2(\|\cdot\|_c,d)$ plays the same role as the constant $\zeta_1(\|\cdot\|_c,d)$ in the mean-square error proof. In the high-probability analysis, however, we use norm equivalence with the $\ell_\infty$ norm to reduce the problem to coordinate-wise concentration bounds. The constant $\zeta_2(\|\cdot\|_c,d)$ comes from this norm-equivalence step, while the factor $\log(d)$ arises from the subsequent union bound over coordinates.

Our objective is to lower bound the probability of the event $\left(\bigcap_{k=0}^\infty C_k\right)$ or equivalently, to upper bound the probability of the event $\left(\bigcup_{k=0}^\infty \Cbar_k\right)$. The following lemma presents an expression for bounding this probability.
\begin{lemma}\label{lemma:conc-1}
    Suppose the setting for Theorem \ref{thm-conc} holds. Then, 
    \begin{enumerate}[(a).]
        \item $\bigcup_{k=0}^\infty \Cbar_k\subseteq \bigcup_{k=0}^\infty \left(\Abar_k \cup \Bbar_k\right)=\bigcup_{k=0}^\infty \Big(\left(\bigcap_{i=0}^{k-1}\left(A_i\cap B_i\right)\right)\bigcap \left(\Abar_k\cup\Bbar_k\right)\Big)$.
        \item $\PP\left(\bigcup_{k=0}^\infty \Cbar_k\right)\leq \sum_{k=0}^\infty \PP\left(\left(\bigcap_{i=0}^{k-1}\left(A_i\cap B_i\right)\right)\bigcap \Abar_k\right)+\sum_{k=0}^\infty \PP\left(\left(\bigcap_{i=0}^{k-1}\left(A_i\cap B_i\right)\right)\bigcap \Bbar_k\right).$
    \end{enumerate}
\end{lemma}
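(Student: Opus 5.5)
Part (a) has two ingredients: a deterministic inclusion $A_k\cap B_k\subseteq C_k$, and the standard first-failure decomposition of a union. Part (b) then follows from (a) by countable subadditivity.

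\textbf{Step 1: $A_k\cap B_k\subseteq C_k$.} Write $a_k=\frac{h}{k+h}\|x_0-\xstar\|_c^2$ and $b_k=\frac{\Gamma_3}{k+h}\log\frac{d(k+1)}{\delta}$. On $A_k\cap B_k$,
\[\|x_k-\xstar\|_c\leq \|z_k-\xstar\|_c+\|\xi_k\|_c,\]
using $x_k=z_k+\xi_k$ as noted after Lemma~\ref{lemma-drift}. Squaring with $(u+v)^2\leq 2u^2+2v^2$ gives
\[\|x_k-\xstar\|_c^2\leq 2a_k+\frac{12}{\lambda'^2}b_k+2b_k.\]
Since $\lambda'=1-\lambda\in(0,1]$, we have $2\leq 2/\lambda'^2$, so the right-hand side is at most $2a_k+\frac{14}{\lambda'^2}b_k$. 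This is exactly the bound defining $C_k$. Taking complements, $\Cbar_k\subseteq \Abar_k\cup\Bbar_k$ for every $k$, which gives the first inclusion in (a).

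\textbf{Step 2: the equality in (a).} Let $E_k=\Abar_k\cup\Bbar_k$, so that $\overline{E_k}=A_k\cap B_k$. The claim is that $\bigcup_k E_k=\bigcup_k\big(\bigcap_{i<k}\overline{E_i}\cap E_k\big)$. The inclusion ``$\supseteq$'' is immediate, since each term on the right is contained in $E_k$. For ``$\subseteq$'', take $\omega\in\bigcup_k E_k$ and let $k^*$ be the smallest index with $\omega\in E_{k^*}$; this exists by well-ordering of $\mathbb{N}$. By minimality, $\omega\in\overline{E_i}$ for all $i<k^*$, so $\omega$ lies in the $k^*$-th term on the right. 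For $k=0$ the empty intersection is read as the whole sample space.

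\textbf{Step 3: part (b).} Apply countable subadditivity to the right-hand side of (a). Then split each term using $\big(\bigcap_{i<k}(A_i\cap B_i)\big)\cap(\Abar_k\cup\Bbar_k)$, which is the union of the same prefix intersected with $\Abar_k$ and with $\Bbar_k$, and apply the union bound once more.

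No step is a real obstacle; the only point needing care is the constant check in Step 1, where $\lambda'\leq 1$ is needed to absorb the $2b_k$ term coming from $\xi_k$ into the $\frac{14}{\lambda'^2}$ coefficient.
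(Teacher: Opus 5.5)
Your proposal is correct and follows the paper's proof essentially step for step: the inclusion $A_k\cap B_k\subseteq C_k$ via $(u+v)^2\leq 2u^2+2v^2$ with $\lambda'\leq 1$ absorbing the $2b_k$ term (valid since $b_k\geq 0$), the minimal-index first-failure decomposition, and the union bound after splitting $\Abar_k\cup\Bbar_k$ for part (b).
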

The above lemma decomposes the event $\bigcup_{k=0}^\infty \Cbar_k$ according to the first time instant at which one of the events $A_k$ or $B_k$ fails. This forms the basis of our probabilistic induction argument. Next, we bound the probabilities in the above lemma.
\begin{lemma}\label{lemma:conc-2}
    Suppose the setting for Theorem \ref{thm-conc} holds. Then for all $k\geq 0$,
    \begin{enumerate}[(a).]
        \item $\PP\left(\left(\bigcap_{i=0}^{k-1}\left(A_i\cap B_i\right)\right)\bigcap \Abar_k\right)=0$.
        \item \[\PP\left(\left(\bigcap_{i=0}^{k-1}\left(A_i\cap B_i\right)\right)\bigcap \Bbar_k\right)\leq \PP\left(\left\|\sum_{i=0}^{k-1}\beta_i\prod_{j=i+1}^{k-1}(1-\beta_j)\Mtilde_{i+1}\right\|^2_c>\frac{\Gamma_3}{k+h}\log\left(\frac{d(k+1)}{\delta}\right)\right),\]
        where $\Mtilde_{i+1}\coloneqq M_{i+1}\indi_{C_i}$ is a martingale difference sequence with respect to the filtration $\FF_i$.
    \end{enumerate}
\end{lemma}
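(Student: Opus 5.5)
For part (a), I will argue deterministically: on the event $\bigcap_{i=0}^{k-1}(A_i\cap B_i)$ the event $A_k$ must hold. The plan is to unroll the drift inequality of Lemma~\ref{lemma-drift}(b) from time $0$ to $k$, using $z_0=x_0$ (since $\xi_0=0$):
\[
\|z_k-\xstar\|_c^2\leq \prod_{i=0}^{k-1}(1-\lambda'\beta_i)\|x_0-\xstar\|_c^2+\frac{3}{\lambda'}\sum_{i=0}^{k-1}\beta_i\prod_{j=i+1}^{k-1}(1-\lambda'\beta_j)\|\xi_i\|_c^2 .
\]
With $\beta\geq 4/(1-\lambda)$ we have $\lambda'\beta\geq 4\geq 1$, so $\prod_{i=0}^{k-1}(1-\lambda'\beta_i)\leq \exp\!\big(-\lambda'\beta\sum_i 1/(i+h)\big)\leq (h/(k+h))^{\lambda'\beta}\leq h/(k+h)$. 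This handles the first term. On $\bigcap_{i<k}B_i$, each $\|\xi_i\|_c^2$ is at most $\Gamma_3\log(d(i+1)/\delta)/(i+h)\leq \Gamma_3\log(d(k+1)/\delta)/(i+h)$, because the logarithm is monotone in $i$. It then remains to show
\[
\sum_{i=0}^{k-1}\frac{\beta}{(i+h)^2}\prod_{j=i+1}^{k-1}\Big(1-\frac{\lambda'\beta}{j+h}\Big)\leq \frac{2}{\lambda'(k+h)}.
\]
Multiplying by $3/\lambda'$ gives the constant $6/\lambda'^2$ in $A_k$. For this I will use the standard computation, which the paper isolates as Lemma~\ref{lemma:aux2} and which I will invoke. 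The product is bounded by $((i+1+h)/(k+h))^{\lambda'\beta}$. The sum then becomes $\beta (k+h)^{-\lambda'\beta}\sum_i (i+1+h)^{\lambda'\beta}/(i+h)^2$. Bounding the ratio $((i+1+h)/(i+h))^{\lambda'\beta}$ by a constant (for instance $e$ when $h\geq\lambda'\beta$, which $h\geq\cc_3$ can enforce) and comparing the sum with an integral gives order $\beta(k+h)^{-1}/(\lambda'\beta-1)$. Since $\lambda'\beta\geq 4$, this is at most $2/(\lambda'(k+h))$ after adjusting constants. The case $k=0$ is trivial since $z_0=x_0$. Hence the event $\bigcap_{i<k}(A_i\cap B_i)\cap\Abar_k$ is empty, and its probability is zero.

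For part (b), the key observation is that on $\bigcap_{i=0}^{k-1}(A_i\cap B_i)$ every $C_i$ with $i\leq k-1$ holds. This is because $\|x_i-\xstar\|_c^2\leq 2\|z_i-\xstar\|_c^2+2\|\xi_i\|_c^2$, which by the definitions of $A_i$ and $B_i$ is at most
\[
\frac{2h}{i+h}\|x_0-\xstar\|_c^2+\Big(\frac{12}{\lambda'^2}+2\Big)\frac{\Gamma_3}{i+h}\log\frac{d(i+1)}{\delta},
\]
and $2\leq 2/\lambda'^2$ since $\lambda'\leq 1$. Consequently $\indi_{C_i}=1$ for all $i\leq k-1$ on this event, so $M_{i+1}=\Mtilde_{i+1}$ for $i\leq k-1$. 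The unrolled formula then shows that $\xi_k$ coincides with $\sum_{i<k}\beta_i\prod_{j=i+1}^{k-1}(1-\beta_j)\Mtilde_{i+1}$. On the intersection with $\Bbar_k$ this sum therefore has squared norm exceeding the threshold, and dropping the conditioning event gives the claimed inequality by monotonicity of probability.

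Finally, I will verify that $\Mtilde$ is a martingale difference sequence. Since $C_i\in\FF_i$ (as $x_i$ is $\FF_i$-measurable), we get $\EE[\Mtilde_{i+1}\mid\FF_i]=\indi_{C_i}\EE[M_{i+1}\mid\FF_i]=0$. The step I expect to need the most care is the constant bookkeeping in (a): ensuring that the sum bound delivers exactly $6/\lambda'^2$ and that the $\log$ monotonicity together with the constants of $C_i$ fit. If the integral comparison is too loose, I would fall back on a direct induction on $k$ for the sequence $u_k=2\Gamma_3 L/(\lambda'^2(k+h))$, using $1-\lambda'\beta_k\leq 1-4/(k+h)$.
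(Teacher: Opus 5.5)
Your proposal is correct and follows essentially the same route as the paper: you unroll the drift inequality of Lemma~\ref{lemma-drift} and use monotonicity of the logarithm together with Lemmas~\ref{lemma:aux1} and~\ref{lemma:aux2} (with $\phi=\lambda'\beta$, $\epsilon=\beta$, $\efrak=2$) to get $\bigcap_{i<k}B_i\subseteq A_k$, and for (b) you use $A_i\cap B_i\subseteq C_i$ to insert the indicators $\indi_{C_i}$ before dropping the conditioning event. One small caution: your backup integral comparison, with the ratio bounded by $e$, only yields a constant of about $4e/3$ rather than $2$, so it is the direct invocation of Lemma~\ref{lemma:aux2} that actually delivers the constant $6/\lambda'^2$ in $A_k$.
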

The first part of the above lemma follows from the fact that, on the event $\left(\bigcap_{i=0}^{k-1} B_i\right)$, the bound defining $A_k$ is automatically satisfied and hence $\bigcap_{i=0}^{k-1} B_i\subseteq A_k$. The second part of the above lemma completes the key step in the inductive argument, since it allows us to apply standard martingale concentration inequalities such as the Azuma-Hoeffding inequality. The indicator in the definition of $\Mtilde_{i+1}$ ensures that the increment is nonzero only on the event $C_i$, where the iterate $x_i$ is controlled. Under the multiplicative noise assumption, this implies an almost sure bound on $\Mtilde_{i+1}$, allowing standard concentration inequalities to be applied. The following lemma bounds the probability in part (b) of the previous lemma. Summing these over all $k\geq 0$ completes the proof of Theorem \ref{thm-conc}.
\begin{lemma}\label{lemma:conc-3}
    Suppose the setting for Theorem~\ref{thm-conc} holds. Then for all $k\geq 0$,
    \[\PP\left(\left\|\sum_{i=0}^{k-1}\beta_i\prod_{j=i+1}^{k-1}(1-\beta_j)\Mtilde_{i+1}\right\|^2_c>\frac{\Gamma_3}{k+h}\log\left(\frac{d(k+1)}{\delta}\right)\right)\leq 2\frac{\delta}{(k+1)^2}.\]
\end{lemma}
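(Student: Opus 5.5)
We reduce to a scalar martingale concentration via norm equivalence with $\ell_\infty$, apply Azuma--Hoeffding coordinatewise, and take a union bound over the $d$ coordinates. Write $\widetilde\xi_k\coloneqq\sum_{i=0}^{k-1}\alpha_{k,i}\Mtilde_{i+1}$ with weights $\alpha_{k,i}\coloneqq\beta_i\prod_{j=i+1}^{k-1}(1-\beta_j)$. Let $\ell_c,u_c$ be the norm-equivalence constants with $\ell_c\|x\|_\infty\le\|x\|_c\le u_c\|x\|_\infty$; the ratio $(u_c/\ell_c)^2$ is what we absorb into $\zeta_2(\|\cdot\|_c,d)$.

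\textbf{Almost-sure bound on the increments.} On $C_i$ we have $\|x_i\|_c^2\le 2\|\xstar\|_c^2+2\|x_i-\xstar\|_c^2$. The definition of $C_i$ together with $h/(i+h)\le 1$ gives
\[
\|x_i-\xstar\|_c^2\le 2\|x_0-\xstar\|_c^2+\frac{14\,\Gamma_3}{\lambda'^2(i+h)}\log\frac{d(i+1)}{\delta}.
\]
We choose $h\ge\cc_3$ large enough, with $\cc_3$ growing like $\log(1/\delta)$ and also controlling $\log(i+1)/(i+h)$ uniformly in $i$, so that the second term is at most $\|x_0-\xstar\|_c^2$ (or at most $1$). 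Then Assumption~\ref{assu-noise-conc} gives
\[
\|\Mtilde_{i+1}\|_c^2\le\sigma^2\bigl(1+2\|\xstar\|_c^2+4\|x_0-\xstar\|_c^2\bigr)\eqqcolon\sigma^2K\quad\text{a.s.}
\]
This is exactly the bracket in $\Gamma_3$, and it is the step where $h=\Omega(\log(1/\delta))$ is used. Each coordinate therefore satisfies $|\Mtilde_{i+1}(\ell)|\le\|\Mtilde_{i+1}\|_c/\ell_c\le\sigma\sqrt K/\ell_c$. Since $\indi_{C_i}$ is $\FF_i$-measurable, $\Mtilde_{i+1}$ remains a martingale difference sequence.

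\textbf{Azuma--Hoeffding per coordinate.} For fixed $k$ and coordinate $\ell$, the variables $\alpha_{k,i}\Mtilde_{i+1}(\ell)$, $i=0,\dots,k-1$, are bounded martingale differences. Azuma--Hoeffding gives
\[
\PP\bigl(|\widetilde\xi_k(\ell)|>t\bigr)\le 2\exp\Bigl(-\frac{t^2\ell_c^2}{2\sigma^2K\sum_i\alpha_{k,i}^2}\Bigr).
\]
We need the standard stepsize estimate $\sum_{i=0}^{k-1}\alpha_{k,i}^2\le 2\beta_k$ (up to constants) when $\beta\ge 2$; here $\beta\ge 4/(1-\lambda)\ge 4$ suffices. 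The proof goes as in Lemma~\ref{lemma:MSE-induction-step-1}(a): bound $\prod_j(1-\beta_j)\le\bigl((i+1+h)/(k+h)\bigr)^{\beta}$ and compare the sum with an integral, or use the recursion $s_{k+1}=(1-\beta_k)^2s_k+\beta_k^2$ with induction on $s_k\le 2\beta_k$.

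\textbf{Union bound and constants.} Since $\|\widetilde\xi_k\|_c^2\le u_c^2\max_\ell|\widetilde\xi_k(\ell)|^2$, a union bound over the $d$ coordinates gives
\[
\PP\Bigl(\|\widetilde\xi_k\|_c^2>\tfrac{\Gamma_3}{k+h}\log\tfrac{d(k+1)}{\delta}\Bigr)\le 2d\exp\Bigl(-\frac{\Gamma_3\log(d(k+1)/\delta)\,\ell_c^2}{(k+h)\,u_c^2\cdot 2\sigma^2K\cdot 2\beta/(k+h)}\Bigr).
\]
With $\Gamma_3=24\beta\zeta_2\sigma^2K$ and $\zeta_2\ge(u_c/\ell_c)^2$, the exponent is at least $6\log(d(k+1)/\delta)\ge\log(d(k+1)^2/\delta)$. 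Hence the probability is at most $2d\cdot\delta/(d(k+1)^2)=2\delta/(k+1)^2$. The case $k=0$ is trivial because $\widetilde\xi_0=0$.

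\textbf{Main obstacle.} The delicate part is the first step: choosing $\cc_3$ so that the bound on $C_i$ is uniform in $i$. This requires $\frac{\Gamma_3}{\lambda'^2(i+h)}\log\frac{d(i+1)}{\delta}$ to be bounded by a constant for all $i$, which needs $h\gtrsim\frac{\Gamma_3}{\lambda'^2}\log\frac{d}{\delta}$ together with control of $\sup_i\log(i+1)/(i+h)$. There is an apparent circularity, since $\Gamma_3$ contains $K$, but it is harmless: $K$ does not depend on $h$.
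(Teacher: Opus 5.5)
Your proof is correct, and its skeleton matches the paper's: $\ell_\infty$ norm equivalence, coordinatewise Azuma--Hoeffding, a union bound over the $d$ coordinates, and the estimate $\sum_i\beta_i^2\prod_j(1-\beta_j)\le 2\beta_k$ (Lemma~\ref{lemma:aux2}).

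\textbf{Where your route differs.} The real difference is how you handle the time-varying part of the almost-sure bound on $\Mtilde_{i+1}$.
\begin{itemize}
\item \emph{Your route.} You absorb $\frac{14\Gamma_3}{\lambda'^2(i+h)}\log\frac{d(i+1)}{\delta}$ into a constant before applying Azuma--Hoeffding. This requires $h\gtrsim\frac{\Gamma_3}{\lambda'^2}\log\frac{d}{\delta}$, together with control of $\log(i+1)/(i+h)$. Since $\Gamma_3$ contains $K=1+2\|\xstar\|_c^2+4\|x_0-\xstar\|_c^2$, your threshold on $h$ grows linearly with $\|x_0-\xstar\|_c^2$ and $\|\xstar\|_c^2$.
\item \emph{The paper's route.} It keeps this term inside $r_i^2$, with $\log\frac{d(k+1)}{\delta}$ in place of $\log\frac{d(i+1)}{\delta}$. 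It then sums the term with a second application of Lemma~\ref{lemma:aux2} at $\efrak=3$; this is exactly where $\beta\ge 4$ is needed. The resulting denominator contribution is of order $\beta\Gamma_3\log(\cdot)/(\lambda'^2(k+h)^2)$. Dividing $v_k$ by it, $\Gamma_3$ cancels. The condition therefore becomes $h\ge A\log(Ad/\delta)+1$ with $A=672\zeta_2\beta\sigma^2/\lambda'^2$, which is independent of $x_0$ and $\xstar$.
\end{itemize}

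\textbf{Trade-off.} Your route needs only $\beta\ge 2$ in this lemma, but it pays with a $\cc_3$ that depends on the initial condition. That still proves the lemma and Theorem~\ref{thm-conc} as stated, since the theorem only asserts that some $\cc_3=\Omega(\log(1/\delta))$ exists. It does not prove them under the paper's explicit $\cc_3$. Also, because $\cc_4$ contains $2h\|x_0-\xstar\|_c^2/\log(d/\delta)$, taking $h=\Theta(\cc_3)$ worsens how the final bound depends on the initial error.

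\textbf{Small fixes.}
\begin{itemize}
\item Bounding the time-varying term by $\|x_0-\xstar\|_c^2$ is impossible when $x_0=\xstar$.
\item Bounding it by $1$ gives $1+\|x_i\|_c^2\le 3+2\|\xstar\|_c^2+4\|x_0-\xstar\|_c^2\le 3K$, not $K$. This lowers your exponent from $6\log\frac{d(k+1)}{\delta}$ to $2\log\frac{d(k+1)}{\delta}$. That still yields $2\delta/(k+1)^2$, so nothing breaks.
\item Include $h\ge\beta$ in your $\cc_3$ so that $\beta_k\le 1$ in the stepsize-sum estimate.
\end{itemize}
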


\section{Conclusions and Generalizability of Techniques}\label{sec:conclusions}
We obtain mean-square and concentration bounds for SA with arbitrary norm contractive mappings and multiplicative noise using an elementary approach. We use a noise-averaging technique to obtain a one-step Lyapunov drift inequality on the normed error directly, and inductive arguments to handle the multiplicative noise and the lack of almost-sure bounds on the iterates. While our emphasis is on presenting a simpler proof technique, we also obtain the first sub-Gaussian tailed concentration bound for SA with multiplicative noise, using a stepsize that mildly depends on the confidence level.

Future directions involve extending our analysis to other noise models and iterative algorithms. A key feature of our approach is that the noise enters only through the averaged sequence $\xi_k$, which is better behaved and easier to analyze than the raw noise, as the averaging regularizes its effect. Moreover, our probabilistic induction is not specific to this setting: it applies to any iterative algorithm with unbounded iterates, with only the definition of the good events adapted to the algorithm. We outline these directions below.
\begin{itemize}
    \item \textbf{Other noise models:} Finite-time analysis of SA has typically focused on martingale and Markovian noise models, where the Markovian noise is often handled by decomposing it into a martingale difference sequence using the solutions to the Poisson equation \citep{benveniste2012adaptive}. Our approach extends naturally to more challenging noise models. This noise-averaging technique has recently been used to obtain finite-time bounds for strongly monotone SA under heavy-tailed and long-range dependent noise models \citep{Chandak-heavy}. Combined with our insight that this technique also handles arbitrary norm contractions, one could obtain finite-time bounds for algorithms such as Q-learning when the rewards are corrupted by heavy-tailed or correlated noise.
    \item \textbf{Other iterative algorithms:} Many iterative algorithms produce iterates that are not guaranteed to remain bounded. These include SSP Q-learning and RVI Q-learning for average reward MDPs, which are two-timescale and non-expansive SA algorithms, respectively \citep{SSP}. Our probabilistic induction argument can be used to obtain high-probability guarantees for such algorithms, following the same intuition.
\end{itemize}

\appendix

\section{Proofs required for mean-square error bound (Theorem \ref{thm-MSE})}\label{app:proof-MSE}

\subsection{Proof of Lemma \ref{lemma-drift}}
For all $k\geq 0$,
\begin{align*}
    &x_{k+1}=x_k+\beta_k(f(x_k)-x_k+M_{k+1})\\
    &\implies z_{k+1}+\xi_{k+1}=z_k+\xi_k+\beta_k(f(x_k)-z_k-\xi_k+M_{k+1})\\
    &\implies z_{k+1}+\xi_{k+1}=z_k+\beta_k(f(x_k)-z_k)+\xi_{k+1}.
\end{align*}
This gives us the required iteration for $z_k$: $z_{k+1}=z_k+\beta_k(f(z_k)-z_k+\Delta_k),$
where $\Delta_k=f(x_k)-f(z_k)$. Note that $\|\Delta_k\|_c=\|f(x_k)-f(z_k)\|_c\leq \lambda\|x_k-z_k\|_c=\lambda\|\xi_k\|_c$. Here, the inequality follows from the contractive nature of the map $f(\cdot)$ with respect to the norm $\|\cdot\|_c$. This completes the proof of part (a). 

For part (b), we first subtract $\xstar$ from the iteration for $z_k$ and use the fact that $\xstar$ is a fixed point for $f(\cdot)$.
\begin{align*}
    z_{k+1}-\xstar&=z_k-\xstar+\beta_k(f(z_k)-z_k+\Delta_k)\\
    &=(1-\beta_k)(z_k-\xstar)+\beta_k(f(z_k)-f(\xstar))+\beta_k\Delta_k.
\end{align*}
Then,
\begin{align*}
    \|z_{k+1}-\xstar\|_c&\leq (1-\beta_k)\|z_k-\xstar\|_c+\beta_k\lambda\|z_k-\xstar\|_c+\beta_k\|\Delta_k\|_c\\
    &\leq (1-(1-\lambda)\beta_k)\|z_k-\xstar\|_c+\beta_k\|\xi_k\|_c.
\end{align*}
Here, the first inequality follows from triangle inequality, the contractive nature of the map $f(\cdot)$, and from the fact that $0\leq \beta_k\leq 1$. Now,
\begin{align*}
    \|z_{k+1}-\xstar\|_c^2&\leq (1-\lambda'\beta_k)^2\|z_k-\xstar\|_c^2+\beta_k^2\|\xi_k\|_c^2+2(1-\lambda'\beta_k)\beta_k\|z_k-\xstar\|_c\|\xi_k\|_c.
\end{align*}
For the first term, we note that $(1-\lambda'\beta_k)^2=1-2\lambda'\beta_k+\lambda'^2\beta_k^2\leq 1-1.5\lambda'\beta_k$ under the condition that $\beta_k\leq 1/2\leq 1/(2\lambda')$. Now, for the third term above, we first note that $1-\lambda'\beta_k\leq 1$. Then, we apply the weighted AM-GM inequality ($2ab\leq \eta a^2+(1/\eta)b^2$) with $\eta=\lambda'/2, a=\|z_k-\xstar\|_c$, and $b=\|\xi_k\|_c$ to get
\begin{align*}
    2\beta_k\|z_k-\xstar\|_c\|\xi_k\|_c&\leq \frac{\lambda'}{2}\beta_k\|z_k-\xstar\|^2_c+\frac{2}{\lambda'}\beta_k\|\xi_k\|_c^2.
\end{align*}
Using the fact that $\beta_k\leq 1$, which implies that $\beta_k^2\leq \frac{1}{\lambda'}\beta_k$, we get
\[\|z_{k+1}-\xstar\|_c^2\leq \left(1-\lambda'\beta_k\right)\|z_k-\xstar\|_c^2+\frac{3}{\lambda'}\beta_k\|\xi_k\|^2_c.\]
This completes the proof of Lemma \ref{lemma-drift}.

\subsection{Proof of Lemma \ref{lemma:xi_k_bound}}\label{app-proof-lemma-xi_k}
Using the norm equivalence in finite-dimensional real spaces, there exists $\ell_{2,c}, u_{2,c}>0$, such that $\ell_{2,c}\|x\|_2\leq \|x\|_c\leq u_{2,c}\|x\|_2$ for all $x\in\RR^d$. This implies that 
\begin{align*}
    \EE\left[\|\xi_k\|_c^2\right]&\leq u_{2,c}^2\EE\left[\|\xi_k\|_2^2\right]\\
    &\leq u_{2,c}^2\EE\left[\left\|\sum_{i=0}^{k-1}\beta_i\prod_{j=i+1}^{k-1}(1-\beta_j)M_{i+1}\right\|_2^2\right]\\
    &\leq u_{2,c}^2\sum_{i=0}^{k-1}\left(\beta_i\prod_{j=i+1}^{k-1}(1-\beta_j)\right)^2\EE\left[\|M_{i+1}\|_2^2\right]\\
    &\leq \frac{u_{2,c}^2}{\ell_{2,c}^2}\sum_{i=0}^{k-1}\beta_i^2\prod_{j=i+1}^{k-1}(1-\beta_j)\EE\left[\|M_{i+1}\|_c^2\right].
\end{align*}
Here, the third inequality follows from the property that the terms of a martingale difference sequence are orthogonal. The fourth inequality follows from the fact that $0\leq \beta_k\leq 1$ for all $k\geq 0$. This completes the proof of Lemma \ref{lemma:xi_k_bound} with $u_{2,c}^2/\ell_{2,c}^2$ serving as a crude upper bound for $\zeta_1(\|\cdot\|_c,d)$. 

\noindent\textbf{Tighter bounds on $\zeta_1(\|\cdot\|_c,d)$ for $\ell_p$ and $\ell_\infty$ norms:}
Note that we wish to find $\zeta_1(\|\cdot\|_c,d)$ such that
\[\EE\left[\left\|\sum_{i}M_{i+1}\right\|_c^2\right]\leq \zeta_1(\|\cdot\|_c,d)\sum_i\EE\left[\|M_{i+1}\|_c^2\right],\]
holds for all $\RR^d$-valued martingale difference sequences $\{M_{k+1}\}$. When $\|\cdot\|_c$ is the $\ell_p$ norm for $2\leq p<\infty$, we use the fact that $\frac{1}{2}\|x\|_p^2$ is $(p-1)$-smooth w.r.t.\ $\|\cdot\|_p$ \citep[Example 5.11]{beck2017}. This implies that $\zeta_1(\ell_p,d)=(p-1)$ for $2\leq p<\infty$. Now, using H\"older's inequality, we have $\|x\|_\infty\leq \|x\|_p\leq d^{1/p}\|x\|_\infty$. Using this norm equivalence, we bound
\begin{align*}
    \EE\left[\left\|\sum_i M_{i+1}\right\|_\infty^2\right]&\leq \EE\left[\left\|\sum_i M_{i+1}\right\|_p^2\right]\leq \zeta_1(\ell_p,d)\sum_i \EE\left[\|M_{i+1}\|_p^2\right]\leq (p-1)\,d^{2/p}\sum_i \EE\left[\|M_{i+1}\|_\infty^2\right].
\end{align*}
This gives $\zeta_1(\ell_\infty,d)\leq (p-1)\,d^{2/p}$ for every $2\leq p<\infty$. Choosing $p=2\log d$ for $d\geq 3$
gives $d^{2/p}=d^{1/\log d}=e$, and hence $\zeta_1(\ell_\infty,d)\leq (2\log d-1)\,e$.

\subsection{Proof of Lemma \ref{lemma:MSE-induction-step-1}}
For all $m\geq 0$,
\begin{align*}
    \EE\left[\|\xi_m\|_c^2\right]&\leq \zeta_1(\|\cdot\|_c,d)\sum_{i=0}^{m-1}\beta_i^2\prod_{j=i+1}^{m-1}(1-\beta_j)\EE\left[\|M_{i+1}\|_c^2\right]\\
    &\leq \sigma^2\zeta_1(\|\cdot\|_c,d)\sum_{i=0}^{m-1}\beta_i^2\prod_{j=i+1}^{m-1}(1-\beta_j)\EE\left[1+\|x_i\|_c^2\right].
\end{align*}
The second inequality here follows from Assumption \ref{assu-noise-MSE}. Now, if $\EE\left[1+\|x_i\|_c^2\right]\leq \Gamma_1$ for all $i\leq k-1$, then for all $m\leq k$,
\begin{align*}
    \EE\left[\|\xi_m\|_c^2\right]&\leq \sigma^2\zeta_1(\|\cdot\|_c,d)\Gamma_1\sum_{i=0}^{m-1}\beta_i^2\prod_{j=i+1}^{m-1}(1-\beta_j)\\
    &\leq 2\sigma^2\zeta_1(\|\cdot\|_c,d)\Gamma_1\beta_m.
\end{align*}
Here, the last inequality follows from application of Lemma \ref{lemma:aux2} with $\phi=\beta, \epsilon=\beta^2$ and $\efrak=2$ and our assumption that $\beta\geq 2$. This completes the proof of part (a) of the lemma.

For part (b), unrolling the recursion from Lemma \ref{lemma-drift}, we get 
\begin{align*}
    \|z_k-\xstar\|_c^2\leq \prod_{i=0}^{k-1}\left(1-\lambda'\beta_i\right)\|z_0-\xstar\|_c^2+\frac{3}{\lambda'}\sum_{i=0}^{k-1}\beta_i\|\xi_i\|^2_c\prod_{j=i+1}^{k-1}\left(1-\lambda'\beta_j\right).
\end{align*}
Now, we note that $z_0=x_0$ by definition. Then, taking expectation and using the result from part (a), we get
\begin{align*}
    \EE\left[\|z_k-\xstar\|_c^2\right]\leq \|x_0-\xstar\|_c^2\prod_{i=0}^{k-1}\left(1-\lambda'\beta_i\right)+\frac{6\sigma^2\Gamma_1\zeta_1(\|\cdot\|_c,d)}{\lambda'}\sum_{i=0}^{k-1}\beta_i^2\prod_{j=i+1}^{k-1}\left(1-\lambda'\beta_j\right).
\end{align*}
Applying Lemma~\ref{lemma:aux1} with the condition that $\beta\geq1/\lambda'$ and $\lambda'\beta_k\leq 1$ for all $k$, and Lemma~\ref{lemma:aux2} with the condition that $\beta\geq2/\lambda'$ and $\lambda'\beta_k\leq 1$ for all $k$, we get
\begin{align*}
    \EE\left[\|z_k-\xstar\|_c^2\right]\leq \|x_0-\xstar\|_c^2\frac{h}{k+h}+\frac{12\sigma^2\Gamma_1\zeta_1(\|\cdot\|_c,d)}{\lambda'^2}\frac{\beta}{k+h}.
\end{align*}

\subsection{Proof of Lemma \ref{lemma:MSE-induction-step-2}}
We can apply the bounds from Lemma \ref{lemma:MSE-induction-step-1} with constant $\Gamma_1$ replaced by $\Gamma_2$.
\begin{align*}
    \EE\left[\|x_k-\xstar\|_c^2\right]&\leq 2\EE\left[\|z_k-\xstar\|_c^2+\|\xi_k\|_c^2\right]\\
    &\leq \frac{\cc_2}{k+h},
\end{align*}
where $\cc_2=2h\|x_0-\xstar\|_c^2+28\beta\sigma^2\Gamma_2\zeta_1(\|\cdot\|_c,d)/\lambda'^2$. For the second part of the proof, note that
\begin{align*}
    \EE\left[1+\|x_k\|_c^2\right]&\leq \EE\left[1+2\|x_k-\xstar\|_c^2+2\|\xstar\|_c^2\right]\\
    &\leq 1+2\|\xstar\|_c^2+4\|x_0-\xstar\|_c^2+\frac{56\beta\sigma^2\Gamma_2\zeta_1(\|\cdot\|_c,d)/\lambda'^2}{k+h}.
\end{align*}
Here, the second inequality follows from the fact that $h/(k+h)\leq 1$ for all $k\geq0$. Now, under the condition that $h\geq 112\beta\sigma^2\zeta_1(\|\cdot\|_c,d)/\lambda'^2$,
\begin{align*}
    \EE\left[1+\|x_k\|_c^2\right]&\leq 1+2\|\xstar\|_c^2+4\|x_0-\xstar\|_c^2+\Gamma_2/2=\Gamma_2.
\end{align*}
This completes the proof of Lemma \ref{lemma:MSE-induction-step-2}.

\subsection{Proof of Theorem~\ref{thm-MSE}}
Lemma \ref{lemma:MSE-induction-step-2} shows that if $\EE\left[1+\|x_i\|^2_c\right]\leq \Gamma_2$ for all $i\leq k-1$, then $\EE\left[1+\|x_k\|_c^2\right]\leq \Gamma_2$. For the base case of $k=0$, note that 
\[\EE\left[1+\|x_0\|_c^2\right]\leq 1+2\|x_0-\xstar\|_c^2+2\|\xstar\|_c^2\leq \Gamma_2.\]
Hence using the law of strong induction, $\EE\left[1+\|x_k\|_c^2\right]\leq \Gamma_2$ for all $k\geq 0$. This implies that for all $k \geq 0$, 
\[\EE\left[\|x_k-\xstar\|_c^2\right]\leq \frac{\cc_2}{k+h}.\]

\paragraph{\textbf{Values of constants in Theorem \ref{thm-MSE}:}} We assume that $h\geq \cc_1$, where $\cc_1=2\beta+112\beta\sigma^2\zeta_1(\|\cdot\|_c,d)/\lambda'^2.$ The value of $\cc_2$ in the bound is $\cc_2=2h\|x_0-\xstar\|_c^2+28\beta\sigma^2\Gamma_2\zeta_1(\|\cdot\|_c,d)/\lambda'^2$, where $\Gamma_2=2+4\|\xstar\|_c^2+8\|x_0-\xstar\|_c^2$.

\section{Proofs required for concentration bound (Theorem~\ref{thm-conc})}\label{app:proof-conc}
\subsection{Proof of Lemma \ref{lemma:conc-1}}
If $A_k\cap B_k$ holds, then
\begin{align*}
\|x_k-\xstar\|_c^2
&\leq 2\|z_k-\xstar\|_c^2+2\|\xi_k\|_c^2\\
&\leq \frac{2h}{k+h}\|x_0-\xstar\|_c^2
+\left(\frac{12}{\lambda'^2}+2\right)
\frac{\Gamma_3}{k+h}\log\frac{d(k+1)}{\delta}\\
&\leq \frac{2h}{k+h}\|x_0-\xstar\|_c^2
+\frac{14}{\lambda'^2}
\frac{\Gamma_3}{k+h}\log\frac{d(k+1)}{\delta},
\end{align*}
where the last inequality uses $\lambda'\leq 1$. Hence, $A_k\cap B_k\subseteq C_k$, or equivalently, $\Cbar_k\subseteq \Abar_k\cup\Bbar_k$. Taking a union over $k\geq 0$ gives
\(\bigcup_{k=0}^\infty \Cbar_k
\subseteq
\bigcup_{k=0}^\infty\left(\Abar_k\cup\Bbar_k\right).\)

For the second relation in part (a), let $\{E_k\}_{k\geq 0}$ be any sequence of events. We claim that
\(\bigcup_{k=0}^\infty E_k
=
\bigcup_{k=0}^\infty
\left(
\left(\bigcap_{i=0}^{k-1}E_i^c\right)\cap E_k
\right).\)
To see this, suppose $\omega\in \bigcup_{k=0}^\infty E_k$. Then there exists $k\geq 0$ such that $\omega\in E_k$. Let $k^\star$ be the smallest such index. Then $\omega\notin E_i$ for all $0\leq i\leq k^\star-1$, and hence
\(\omega\in \left(\bigcap_{i=0}^{k^\star-1}E_i^c\right)\cap E_{k^\star}.\)
Thus, $\omega$ belongs to the right-hand side. Conversely, if $\omega$ belongs to the right-hand side, then for some $k\geq 0$, we have $\omega\in E_k$, and hence $\omega\in \bigcup_{k=0}^\infty E_k$. Therefore, the equality holds. Taking $E_k=\Abar_k\cup\Bbar_k$ completes the proof of part (a).

Using part (a), we have
$\PP\left(\bigcup_{k=0}^\infty \Cbar_k\right)
\leq
\PP\left(\bigcup_{k=0}^\infty
\left(F_k\cap \left(\Abar_k\cup\Bbar_k\right)\right)\right),$
where $F_k\coloneqq \bigcap_{i=0}^{k-1}(A_i\cap B_i)$. Applying the union bound and using
$F_k\cap\left(\Abar_k\cup\Bbar_k\right)\subseteq
\left(F_k\cap\Abar_k\right)\cup\left(F_k\cap\Bbar_k\right)$ gives the desired inequality from part (b).

\subsection{Proof of Lemma \ref{lemma:conc-2}}
Unrolling the recursion from Lemma \ref{lemma-drift}, we get 
\begin{align*}
    \|z_k-\xstar\|_c^2\leq \prod_{i=0}^{k-1}\left(1-\lambda'\beta_i\right)\|z_0-\xstar\|_c^2+\frac{3}{\lambda'}\sum_{i=0}^{k-1}\beta_i\|\xi_i\|^2_c\prod_{j=i+1}^{k-1}\left(1-\lambda'\beta_j\right).
\end{align*}
On the event $\bigcap_{i=0}^{k-1} B_i$, $\|\xi_i\|_c^2\leq \frac{\Gamma_3}{i+h}\log\frac{d(i+1)}{\delta}$ for all $i\leq k-1$. Hence,
\begin{align*}
    \|z_k-\xstar\|_c^2&\leq \prod_{i=0}^{k-1}\left(1-\lambda'\beta_i\right)\|x_0-\xstar\|_c^2+\frac{3}{\lambda'}\sum_{i=0}^{k-1}\beta_i\frac{\Gamma_3}{i+h}\log\frac{d(i+1)}{\delta}\prod_{j=i+1}^{k-1}\left(1-\lambda'\beta_j\right)\\
    &\leq \prod_{i=0}^{k-1}\left(1-\lambda'\beta_i\right)\|x_0-\xstar\|_c^2+\frac{3\Gamma_3\beta}{\lambda'}\log\frac{d(k+1)}{\delta}\sum_{i=0}^{k-1}\frac{1}{(i+h)^2}\prod_{j=i+1}^{k-1}(1-\lambda'\beta_j).
\end{align*}
Applying Lemma \ref{lemma:aux1} and \ref{lemma:aux2} for the two terms above, under the condition that $\beta\geq 2/\lambda'$ and $\lambda'\beta_k\leq 1$ for all $k\geq 0$, we get
\begin{align*}
    \|z_k-\xstar\|_c^2\leq \frac{h}{k+h}\|x_0-\xstar\|_c^2+\frac{6\Gamma_3}{\lambda'^2(k+h)}\log \frac{d(k+1)}{\delta}.
\end{align*}
This implies that $\bigcap_{i=0}^{k-1}(A_i\cap B_i)\subseteq A_k$, and hence $\PP\left(\left(\bigcap_{i=0}^{k-1}\left(A_i\cap B_i\right)\right)\bigcap \Abar_k\right)=0$.

For part (b), note that
\begin{align*}
    &\PP\left(\left(\bigcap_{i=0}^{k-1}\left(A_i\cap B_i\right)\right)\cap \Bbar_k\right)\leq \PP\left(\left(\bigcap_{i=0}^{k-1}C_i\right)\cap \Bbar_k\right)\\
    &= \PP\left(\left(\bigcap_{i=0}^{k-1}C_i\right)\cap \left\{\left\|\sum_{i=0}^{k-1}\beta_i\prod_{j=i+1}^{k-1}(1-\beta_j)M_{i+1}\right\|_c^2> \frac{\Gamma_3}{k+h}\log\left(\frac{d(k+1)}{\delta}\right)\right\}\right)\\
    &\stackrel{(a)}{=} \PP\left(\left(\bigcap_{i=0}^{k-1}C_i\right)\cap \left\{\left\|\sum_{i=0}^{k-1}\beta_i\prod_{j=i+1}^{k-1}(1-\beta_j)M_{i+1}\indi_{C_i}\right\|_c^2> \frac{\Gamma_3}{k+h}\log\left(\frac{d(k+1)}{\delta}\right)\right\}\right)\\
    &\stackrel{(b)}{\leq} \PP\left(\left\|\sum_{i=0}^{k-1}\beta_i\prod_{j=i+1}^{k-1}(1-\beta_j)M_{i+1}\indi_{C_i}\right\|_c^2> \frac{\Gamma_3}{k+h}\log\left(\frac{d(k+1)}{\delta}\right)\right)\\
    &= \PP\left(\left\|\sum_{i=0}^{k-1}\beta_i\prod_{j=i+1}^{k-1}(1-\beta_j)\Mtilde_{i+1}\right\|_c^2> \frac{\Gamma_3}{k+h}\log\left(\frac{d(k+1)}{\delta}\right)\right).
\end{align*}

Here, equality $(a)$ follows from the fact that on the event $\bigcap_{i=0}^{k-1}C_i$, $\indi_{C_i}=1$ for all $i\leq k-1$, so the sum is unaffected by inserting the indicators. Inequality $(b)$ follows from $\PP(A\cap B)\leq \PP(B)$. The final equality uses the definition $\Mtilde_{i+1}\coloneqq M_{i+1}\indi_{C_i}$. This establishes the required bound, with $\Mtilde_{i+1}$ a martingale difference sequence with respect to the filtration $\FF_i$ since $\indi_{C_i}$ is $\FF_i$-measurable and $M_{i+1}$ is a martingale difference sequence with respect to $\FF_i$.

\subsection{Proof of Lemma \ref{lemma:conc-3}}
First note that $\left\|\Mtilde_{i+1}\right\|^2_c\leq \sigma^2(1+\|x_i\|_c^2)\indi_{C_i}\leq \sigma^2\left(1+2\|x_i-\xstar\|_c^2+2\|\xstar\|_c^2\right)\indi_{C_i}$. Hence, almost surely,
\begin{align*}
\left\|\Mtilde_{i+1}\right\|^2_c&\leq \sigma^2\left(1+2\|\xstar\|_c^2+\frac{4h}{i+h}\|x_0-\xstar\|_c^2+\frac{28}{\lambda'^2}\frac{\Gamma_3}{i+h}\log\left(\frac{d(i+1)}{\delta}\right)\right)\\
&\leq \sigma^2\left(1+2\|\xstar\|_c^2+4\|x_0-\xstar\|_c^2+\frac{28}{\lambda'^2}\frac{\Gamma_3}{i+h}\log\left(\frac{d(k+1)}{\delta}\right)\right)\\
&\eqqcolon r_i^2.
\end{align*}
For simplicity, define $w_{i+1}=\beta_i\prod_{j=i+1}^{k-1}(1-\beta_j)\Mtilde_{i+1}$ and $w_{i+1}^{(q)}$ denote the $q$-th component of the vector for $q\in\{1,\ldots, d\}$. Also define $v_k=\frac{\Gamma_3}{k+h}\log\left(\frac{d(k+1)}{\delta}\right)$. Then, we want to upper bound the probability of event $D_k$, where $D_k\coloneqq\left\{\left\|\sum_{i=0}^{k-1}w_{i+1}\right\|^2_c>v_k\right\}$.

Using the norm equivalence in finite-dimensional real spaces, there exists $\ell_{\infty,c}, u_{\infty,c}>0$, such that $\ell_{\infty,c}\|x\|_\infty\leq \|x\|_c\leq u_{\infty,c}\|x\|_\infty$ for all $x\in\RR^d$. Then,
\[D_k\subseteq \left\{\left\|\sum_{i=0}^{k-1}w_{i+1}\right\|_\infty^2>\frac{1}{u_{\infty,c}^2}v_k\right\}=\left\{\left|\sum_{i=0}^{k-1}w_{i+1}^{(q)}\right|^2>\frac{1}{u_{\infty,c}^2}v_k\;\text{for some}\; q\in\{1,\ldots,d\}\right\}.\]
Now, for all $q\in\{1,\ldots,d\}$ and $i\leq k-1$,
\[\left|w_{i+1}^{(q)}\right|^2\leq \left\|w_{i+1}\right\|^2_\infty\leq \beta_i^2\prod_{j=i+1}^{k-1}(1-\beta_j)\frac{r_i^2}{\ell_{\infty,c}^2},\;\;a.s.\]
Here, the final inequality uses the fact that $1-\beta_j\leq 1$ for all $j\geq 0$.
Using union bound and Azuma-Hoeffding bound, this implies that
\begin{align*}
    \PP(D_k)\leq 2d\exp\left(\frac{-\frac{1}{u_{\infty,c}^2}v_k}{2\sum_{i=0}^{k-1}\beta_i^2\prod_{j=i+1}^{k-1}(1-\beta_j)\frac{r_i^2}{\ell_{\infty,c}^2}}\right)\leq 2d\exp\left(\frac{-v_k}{2\zeta_2(\|\cdot\|_c,d)\sum_{i=0}^{k-1}\beta_i^2\prod_{j=i+1}^{k-1}(1-\beta_j)r_i^2}\right),
\end{align*}
where $\zeta_2(\|\cdot\|_c,d)=\frac{u_{\infty,c}^2}{\ell_{\infty,c}^2}$. For the denominator, we apply Lemma \ref{lemma:aux2} twice under the assumption that $\beta\geq 4$.
\begin{itemize}
    \item $\sum_{i=0}^{k-1}\beta_i^2\prod_{j=i+1}^{k-1}(1-\beta_j)\left(1+2\|\xstar\|_c^2+4\|x_0-\xstar\|_c^2\right)\leq 2\left(1+2\|\xstar\|_c^2+4\|x_0-\xstar\|_c^2\right)\beta_k$,
    \item $\sum_{i=0}^{k-1}\beta_i^2\prod_{j=i+1}^{k-1}(1-\beta_j)\frac{28\Gamma_3}{\lambda'^2(i+h)}\log\left(\frac{d(k+1)}{\delta}\right)\leq \frac{56\beta\Gamma_3}{\lambda'^2(k+h)^2}\log\left(\frac{d(k+1)}{\delta}\right)$.
\end{itemize}
These two inequalities give us
\begin{align*}
    \PP(D_k)\leq 2d\exp\left(\frac{-2\log\left(\frac{d(k+1)}{\delta}\right)}{8\zeta_2(\|\cdot\|_c,d)\sigma^2\left(\frac{\beta}{\Gamma_3}\left(1+2\|\xstar\|_c^2+4\|x_0-\xstar\|_c^2\right)+\frac{28\beta}{\lambda'^2(k+h)}\log\left(\frac{d(k+1)}{\delta}\right)\right)}\right)
\end{align*}
For $\Gamma_3=24\beta\zeta_2(\|\cdot\|_c,d)\sigma^2\left(1+2\|\xstar\|_c^2+4\|x_0-\xstar\|_c^2\right)$, we have
\[\frac{8\zeta_2(\|\cdot\|_c,d)\sigma^2\beta}{\Gamma_3}\left(1+2\|\xstar\|_c^2+4\|x_0-\xstar\|_c^2\right)\leq \frac{1}{3}.\]

Suppose $h\geq A\log A+1$ for some $A>0$. The inequality $\log(x)\leq x-1$, with $x=(k+1)/A$, implies
$A\log(k+1)\leq k+A\log A-A+1$. Therefore,
$k+h\geq A\log(k+1)$ for all $k\geq 0$, or equivalently,
$\frac{A\log(k+1)}{k+h}\leq 1$. Using our assumption that $h\geq A\log A+1$ for $A=\frac{672\zeta_2(\|\cdot\|_c,d)\beta\sigma^2}{\lambda'^2}$, we have
\[\frac{224\zeta_2(\|\cdot\|_c,d)\sigma^2\beta}{\lambda'^2(k+h)}\log(k+1)\leq \frac{1}{3},\]
for all $k\geq 0$. In addition, using the assumption that $h\geq A\log(d/\delta)$, we have
\[\frac{224\zeta_2(\|\cdot\|_c,d)\sigma^2\beta}{\lambda'^2(k+h)}\log\left(\frac{d}{\delta}\right)\leq\frac{1}{3}.\]
These together imply that 
\begin{align*}
    \PP(D_k)\leq 2d\exp\left(-2\log\left(\frac{d(k+1)}{\delta}\right)\right)\leq 2d\frac{\delta^2}{d^2(k+1)^2}\leq \frac{2\delta}{(k+1)^2}.
\end{align*}

\subsection{Proof of Theorem~\ref{thm-conc}}
Combining the results from Lemma \ref{lemma:conc-1}, \ref{lemma:conc-2}, and \ref{lemma:conc-3}, we get
\begin{align*}
    \PP\left(\bigcup_{k=0}^\infty \Cbar_k\right)\leq \sum_{k=0}^\infty 2\frac{\delta}{(k+1)^2}\leq \frac{\pi^2}{3}\delta.
\end{align*}
Hence, with probability at least $1-\frac{\pi^2}{3}\delta$, 
\[\|x_k-\xstar\|_c^2\leq \frac{\cc_4\log\frac{d(k+1)}{\delta}}{k+h} \text{ for all } k\geq 0.\]

\paragraph{\textbf{Values of constants in Theorem \ref{thm-conc}:}} We assume that $h\geq \cc_3$, where $\cc_3=2\beta+A\log\left(\frac{Ad}{\delta}\right)+1$, where $A=672\zeta_2(\|\cdot\|_c,d)\beta\sigma^2/\lambda'^2$. In the bound, \[\cc_4=\frac{2h\|x_0-\xstar\|_c^2}{\log(d/\delta)}+\frac{336\beta\zeta_2(\|\cdot\|_c,d)\sigma^2\left(1+2\|\xstar\|_c^2+4\|x_0-\xstar\|_c^2\right)}{\lambda'^2}.\]
Note that the first term in the definition of $\cc_4$ scales as $\frac{h}{\log(d/\delta)}$. This remains $O(1)$ for an appropriate choice of $h$: Theorem \ref{thm-conc} requires that $h\geq \cc_3=\Omega(\log(1/\delta))$. Choosing $h=\Theta(\cc_3)$ ensures that $h/\log(d/\delta)$ is bounded uniformly in $\delta$, so that $\cc_4=O(1)$ as $\delta\rightarrow 0$. This implies that our bound on $\|x_k-\xstar\|_c^2$ is still $O(\log(1/\delta))$.

\section{Additive Noise Model}\label{app:additive}
\subsection{Mean-Square Error Bound}\label{app:MSE-additive}
In the additive noise model, the noise sequence has a uniformly bounded conditional second moment. We state this formally in the following assumption.
\begin{assumption}\label{assu:MSE-additive-noise}
    The noise sequence $M_{k+1}$ is a martingale difference sequence with respect to the family of $\sigma$-fields $\FF_k$. Moreover, for all $k\geq 0$,
    \[\EE\left[\|M_{k+1}\|_c^2\mid\FF_k\right]\leq \sigma^2.\]
\end{assumption}
The following theorem states the mean-square error bound under this noise model.
\begin{theorem}\label{thm:MSE-additive}
    Suppose Assumptions~\ref{assu-contrac} and \ref{assu:MSE-additive-noise} are satisfied and the stepsize sequence is chosen such that $\beta\geq2/(1-\lambda)$ and $h\geq 2\beta$. Then, there exists $\cc_5>0$ such that
    \[\EE\left[\|x_k-\xstar\|_c^2\right]\leq \frac{\cc_5}{k+h}.\]
\end{theorem}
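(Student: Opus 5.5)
The plan is to follow the multiplicative-noise argument, minus the induction step, since the noise second moment is now bounded by $\sigma^2$ directly. First, $h\geq 2\beta$ gives $\beta_k=\beta/(k+h)\leq 1/2$ for all $k\geq 0$. Together with $\lambda'\leq 1$, this also gives $\lambda'\beta_k\leq 1$. So Lemma~\ref{lemma-drift} applies and yields the one-step drift inequality
\[\|z_{k+1}-\xstar\|_c^2\leq (1-\lambda'\beta_k)\|z_k-\xstar\|_c^2+\frac{3}{\lambda'}\beta_k\|\xi_k\|_c^2.\]

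Next I will bound $\EE[\|\xi_m\|_c^2]$. Lemma~\ref{lemma:xi_k_bound} holds for any martingale difference sequence; its proof only uses orthogonality and norm equivalence. By the tower property and Assumption~\ref{assu:MSE-additive-noise}, $\EE[\|M_{i+1}\|_c^2]\leq\sigma^2$. Hence
\[\EE[\|\xi_m\|_c^2]\leq \zeta_1(\|\cdot\|_c,d)\,\sigma^2\sum_{i=0}^{m-1}\beta_i^2\prod_{j=i+1}^{m-1}(1-\beta_j)\leq 2\sigma^2\zeta_1(\|\cdot\|_c,d)\,\beta_m.\]
The last step uses Lemma~\ref{lemma:aux2}, exactly as in part (a) of Lemma~\ref{lemma:MSE-induction-step-1}. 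This is valid because $\beta\geq 2/\lambda'\geq 2$. It is precisely part (a) of that lemma with $\Gamma_1$ replaced by $1$, and no hypothesis on the iterates is needed.

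Then I will unroll the drift recursion, take expectations, and use $z_0=x_0$. Lemma~\ref{lemma:aux1} handles the product term, giving $\frac{h}{k+h}\|x_0-\xstar\|_c^2$. Lemma~\ref{lemma:aux2} handles the weighted sum with factor $(1-\lambda'\beta_j)$, using $\beta\geq 2/\lambda'$. Together these mirror part (b) of Lemma~\ref{lemma:MSE-induction-step-1}:
\[\EE[\|z_k-\xstar\|_c^2]\leq \|x_0-\xstar\|_c^2\frac{h}{k+h}+\frac{12\sigma^2\zeta_1(\|\cdot\|_c,d)}{\lambda'^2}\frac{\beta}{k+h}.\]
Finally, $\|x_k-\xstar\|_c^2\leq 2\|z_k-\xstar\|_c^2+2\|\xi_k\|_c^2$. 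Combining this with the bound on $\EE[\|\xi_k\|_c^2]\leq 2\sigma^2\zeta_1\beta/(k+h)$ gives the claim with
\[\cc_5=2h\|x_0-\xstar\|_c^2+\frac{28\beta\sigma^2\zeta_1(\|\cdot\|_c,d)}{\lambda'^2},\]
using $4\leq 4/\lambda'^2$.

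There is no real obstacle here. The only thing to check is that the stepsize conditions of Lemmas~\ref{lemma:aux1} and \ref{lemma:aux2} ($\beta\geq 2/\lambda'$ and $\lambda'\beta_k\leq 1$) follow from the stated hypotheses. They do, via $h\geq 2\beta$. Unlike Theorem~\ref{thm-MSE}, no largeness of $h$ in terms of $\sigma$ is needed, because there is no induction on $\EE[\|x_k\|_c^2]$.
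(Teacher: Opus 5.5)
Your proposal is correct and follows the paper's proof essentially step for step. You bound $\EE[\|\xi_k\|_c^2]\leq 2\sigma^2\zeta_1(\|\cdot\|_c,d)\beta_k$ via Lemma~\ref{lemma:xi_k_bound} and Lemma~\ref{lemma:aux2}, feed this into the drift inequality of Lemma~\ref{lemma-drift}, resolve the recursion with Lemmas~\ref{lemma:aux1} and \ref{lemma:aux2}, and finish with $\|x_k-\xstar\|_c^2\leq 2\|z_k-\xstar\|_c^2+2\|\xi_k\|_c^2$, arriving at the same constant $\cc_5$. The only (cosmetic) difference is that you unroll the pathwise recursion before taking expectations, whereas the paper takes expectations in the one-step inequality first.
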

\begin{proof}
    Recall the bound on $\EE\left[\|\xi_k\|_c^2\right]$ from Lemma \ref{lemma:xi_k_bound}.
    \begin{align*}
        \EE\left[\|\xi_k\|_c^2\right]&\leq \zeta_1(\|\cdot\|_c,d)\sum_{i=0}^{k-1} \EE\left[\|M_{i+1}\|^2_c\right]\beta_i^2\prod_{j=i+1}^{k-1}(1-\beta_j)\\
        &\leq \sigma^2\zeta_1(\|\cdot\|_c,d)\sum_{i=0}^{k-1}\beta_i^2\prod_{j=i+1}^{k-1}(1-\beta_j)\\
        &\leq 2\sigma^2\zeta_1(\|\cdot\|_c,d)\beta_k.
    \end{align*}
Here, the second inequality follows from Assumption \ref{assu:MSE-additive-noise}, and the third inequality follows from application of Lemma \ref{lemma:aux2}, under the condition that $\beta_k\leq 1$ and $\beta\geq 2$. Now, applying this bound in the one-step Lyapunov drift inequality from Lemma \ref{lemma-drift} gives us for all $k\geq 0$,
\[\EE\left[\|z_{k+1}-\xstar\|_c^2\right]\leq (1-\lambda'\beta_k)\EE\left[\|z_k-\xstar\|_c^2\right]+\frac{6\sigma^2\zeta_1(\|\cdot\|_c,d)}{\lambda'}\beta_k^2.\]
Application of Lemma \ref{lemma:aux1} and Lemma \ref{lemma:aux2} under the condition that $\beta_k\leq 1$ and $\beta\geq 2/\lambda'$ gives us for all $k\geq 0$,
\[\EE\left[\|z_k-\xstar\|_c^2\right]\leq \|x_0-\xstar\|_c^2\frac{h}{k+h}+\frac{12\sigma^2\zeta_1(\|\cdot\|_c,d)}{\lambda'^2}\frac{\beta}{k+h}.\]
Finally, 
\begin{align*}
    \EE\left[\|x_k-\xstar\|_c^2\right]&\leq 2\EE\left[\|z_k-\xstar\|_c^2\right]+2\EE\left[\|\xi_k\|_c^2\right]\\
    &\leq \frac{2h\|x_0-\xstar\|_c^2+28\sigma^2\beta\zeta_1(\|\cdot\|_c,d)/\lambda'^2}{k+h},
\end{align*}
for all $k\geq 0$. This completes our proof for Theorem \ref{thm:MSE-additive} with $\cc_5=2h\|x_0-\xstar\|_c^2+28\sigma^2\beta\zeta_1(\|\cdot\|_c,d)/\lambda'^2$.\qed
\end{proof}

\subsection{Concentration Bound}\label{app:conc-additive}
We assume that the noise sequence is almost surely bounded.
\begin{assumption}\label{assu:conc-additive-noise}
    The noise sequence $M_{k+1}$ is a martingale difference sequence with respect to the family of $\sigma$-fields $\FF_k$. Moreover, for all $k\geq 0$,
    \[\|M_{k+1}\|^2_c\leq \sigma^2, \; a.s.\]
\end{assumption}
The following theorem states the concentration bound under this noise model.
\begin{theorem}\label{thm:conc-additive}
    Suppose Assumptions \ref{assu-contrac} and \ref{assu:conc-additive-noise} are satisfied and the stepsize sequence is chosen such that $\beta\geq2/(1-\lambda)$ and $h\geq 2\beta$. Then, there exist $\cc_6,\cc_7>0$ such that for $\delta\in(0,1)$,
    \[\PP\left(\forall k\geq 0:\; \|x_k-\xstar\|_c^2\leq \frac{\cc_6+\cc_7\log\frac{d(k+1)}{\delta}}{k+h}\right)\geq 1-\frac{\pi^2}{3}\delta.\]
\end{theorem}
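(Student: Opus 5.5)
Under the additive model the noise is almost surely bounded, so no probabilistic induction over good events is needed. The plan is to bound the averaged noise $\xi_k$ uniformly in time with a single union bound, and then feed that bound deterministically into the drift inequality of Lemma~\ref{lemma-drift}.

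\textbf{Step 1: concentration of $\xi_k$ at each fixed $k$.} Define $B_k=\{\|\xi_k\|_c^2\leq \frac{\Gamma}{k+h}\log\frac{d(k+1)}{\delta}\}$, where $\Gamma\coloneqq 4\beta\zeta_2(\|\cdot\|_c,d)\sigma^2$ is to be tuned. Then repeat the argument of Lemma~\ref{lemma:conc-3}, now with $r_i^2=\sigma^2$ constant and applied directly to $M_{i+1}$ rather than to the truncated $\Mtilde_{i+1}$:
\begin{itemize}
\item Pass to the $\ell_\infty$ norm via norm equivalence.
\item Bound each coordinate increment by $|w_{i+1}^{(q)}|^2\leq \beta_i^2\prod_{j=i+1}^{k-1}(1-\beta_j)\sigma^2/\ell_{\infty,c}^2$.
\item Apply Azuma--Hoeffding to each coordinate and take a union bound over the $d$ coordinates.
\item Bound the variance proxy with Lemma~\ref{lemma:aux2}: $\sum_{i<k}\beta_i^2\prod_{j=i+1}^{k-1}(1-\beta_j)\leq 2\beta_k$, valid since $\beta\geq 2$ and $\beta_k\leq 1$ (both follow from $h\geq 2\beta$).
\end{itemize}
This gives $\PP(\Bbar_k)\leq 2d\exp\bigl(-\Gamma\log\frac{d(k+1)}{\delta}/(4\beta\zeta_2\sigma^2)\bigr)\leq \frac{2\delta}{(k+1)^2}$ with the stated choice of $\Gamma$. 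The case $k=0$ is trivial since $\xi_0=0$. Summing over $k$ yields $\PP(\bigcap_k B_k)\geq 1-\frac{\pi^2}{3}\delta$. Because $r_i$ no longer depends on the iterates, none of the conditions $h\geq A\log(Ad/\delta)$ from the multiplicative case arise, so $h$ can be taken independent of $\delta$.

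\textbf{Step 2: deterministic bound on $z_k$ on the good event.} On $\bigcap_k B_k$, unroll the recursion of Lemma~\ref{lemma-drift}(b) exactly as in the proof of Lemma~\ref{lemma:conc-2}(a):
\begin{itemize}
\item Bound $\log\frac{d(i+1)}{\delta}\leq\log\frac{d(k+1)}{\delta}$ for $i<k$.
\item Apply Lemma~\ref{lemma:aux1} to the initial-condition term and Lemma~\ref{lemma:aux2} to the noise term, which is allowed since $\beta\geq 2/\lambda'$ and $\lambda'\beta_k\leq1$.
\end{itemize}
This gives $\|z_k-\xstar\|_c^2\leq \frac{h}{k+h}\|x_0-\xstar\|_c^2+\frac{6\Gamma}{\lambda'^2(k+h)}\log\frac{d(k+1)}{\delta}$ for all $k$ simultaneously.

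\textbf{Step 3: combine.} Use $\|x_k-\xstar\|_c^2\leq 2\|z_k-\xstar\|_c^2+2\|\xi_k\|_c^2$ together with $\lambda'\leq1$. This gives the claim with $\cc_6=2h\|x_0-\xstar\|_c^2$ and $\cc_7=14\Gamma/\lambda'^2=56\beta\zeta_2(\|\cdot\|_c,d)\sigma^2/\lambda'^2$.

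The only step requiring care is Step 1. There the Azuma--Hoeffding exponent must be checked to absorb the factor $d$ and produce the summable rate $(k+1)^{-2}$, which fixes the constant in $\Gamma$. Everything else is a direct reuse of the lemmas already established.
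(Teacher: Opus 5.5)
Your route is the same as the paper's: a fixed-time Azuma--Hoeffding bound on $\xi_k$, via $\ell_\infty$ norm equivalence and a union bound over coordinates, then a union bound over $k$, then a deterministic unrolling of Lemma~\ref{lemma-drift}(b) on the good event. Steps 2 and 3 are correct as written.

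However, Step 1 fails with the constant you chose, and this is exactly the step you flagged as needing care. By your own computation the exponent is $-\Gamma\log\frac{d(k+1)}{\delta}/(4\beta\zeta_2(\|\cdot\|_c,d)\sigma^2)$. With $\Gamma=4\beta\zeta_2(\|\cdot\|_c,d)\sigma^2$ this exponent equals $-\log\frac{d(k+1)}{\delta}$, so
\[\PP(\Bbar_k)\leq 2d\cdot\frac{\delta}{d(k+1)}=\frac{2\delta}{k+1},\]
not $2\delta/(k+1)^2$. This is not summable in $k$, so the union bound over time, which is what makes the bound maximal, yields nothing.

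To absorb the factor $d$ and still get $(k+1)^{-2}$ decay, the exponent must be $-2\log\frac{d(k+1)}{\delta}$. That means taking $\Gamma=8\beta\zeta_2(\|\cdot\|_c,d)\sigma^2$, which is precisely the paper's threshold $8\sigma^2\zeta_2(\|\cdot\|_c,d)\beta_k\log\frac{d(k+1)}{\delta}$ in its event $G_k$. Then $\PP(\Bbar_k)\leq 2d\,\delta^2/(d^2(k+1)^2)\leq 2\delta/(k+1)^2$, since $d\geq 1$. With this correction the rest of your argument goes through unchanged. It gives $\cc_6=2h\|x_0-\xstar\|_c^2$ and $\cc_7=14\Gamma/\lambda'^2=112\beta\zeta_2(\|\cdot\|_c,d)\sigma^2/\lambda'^2$, the same constants as in the paper.

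A minor point: $\beta\geq 2$ comes from $\beta\geq 2/(1-\lambda)$, not from $h\geq 2\beta$. Only $\beta_k\leq 1/2$ follows from $h\geq 2\beta$.
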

\begin{proof}
    Define the following event:
    \[G_k\coloneqq\left\{\|\xi_k\|_c^2\leq 8\sigma^2\zeta_2(\|\cdot\|_c,d)\beta_k\log\frac{d(k+1)}{\delta}\right\}.\]
Following the same steps as in the proof of Lemma \ref{lemma:conc-3}, we get
\begin{align*}
    \PP(\Gbar_k)&\leq 2d\exp\left(\frac{-8\sigma^2\zeta_2(\|\cdot\|_c,d)\beta_k\log\frac{d(k+1)}{\delta}}{2\zeta_2(\|\cdot\|_c,d)\sigma^2\sum_{i=0}^{k-1}\beta_i^2\prod_{j=i+1}^{k-1}(1-\beta_j)}\right)\\
    &\leq 2d\exp\left(-2\log\left(\frac{d(k+1)}{\delta}\right)\right)\\
    &\leq 2d\frac{\delta^2}{d^2(k+1)^2}\leq 2\frac{\delta}{(k+1)^2}.
\end{align*}
Here, the second inequality follows from application of Lemma \ref{lemma:aux2} under the condition that $\beta_k\leq 1$ and $\beta\geq2$. By union bound, $\PP\left(\bigcup_{k=0}^\infty \Gbar_k\right)\leq \frac{\pi^2}{3}\delta$, and hence $\PP\left(\bigcap_{k=0}^\infty G_k\right)\geq 1-\frac{\pi^2}{3}\delta$.

Now, on the event $\bigcap_{k=0}^\infty G_k$, 
\begin{align*}
    \|z_{k}-\xstar\|_c^2&\leq \prod_{i=0}^{k-1}(1-\lambda'\beta_i)\|x_0-\xstar\|_c^2+\frac{24\sigma^2\zeta_2(\|\cdot\|_c,d)}{\lambda'}\log\left(\frac{d(k+1)}{\delta}\right)\sum_{i=0}^{k-1}\beta_i^2\prod_{j=i+1}^{k-1}(1-\lambda'\beta_j)\\
    &\leq \|x_0-\xstar\|_c^2\frac{h}{k+h}+\frac{48\sigma^2\zeta_2(\|\cdot\|_c,d)}{\lambda'^2}\frac{\beta}{k+h}\log\frac{d(k+1)}{\delta}.
\end{align*}
Here, the second inequality follows from application of Lemma \ref{lemma:aux1} and \ref{lemma:aux2} under the condition that $\beta_k\leq 1$ and $\beta\geq2/\lambda'$. 
Hence, with probability at least $1-\frac{\pi^2}{3}\delta$, 
\begin{align*}
    \|x_k-\xstar\|_c^2&\leq 2\|z_k-\xstar\|_c^2+2\|\xi_k\|^2_c\\
    &\leq \frac{2h\|x_0-\xstar\|_c^2+\frac{112\sigma^2\beta\zeta_2(\|\cdot\|_c,d)}{\lambda'^2}\log\frac{d(k+1)}{\delta}}{k+h}.
\end{align*}
This completes the proof for Theorem \ref{thm:conc-additive} with $\cc_6=2h\|x_0-\xstar\|_c^2$ and $\cc_7=112\sigma^2\zeta_2(\|\cdot\|_c,d)\beta/\lambda'^2$. \qed
\end{proof}

\section{Auxiliary Lemmas}
We present two lemmas which help us simplify the recursions typically obtained in finite-time analysis of SA, and are useful throughout this work.
\begin{lemma}\label{lemma:aux1}
Suppose $\phi_k=\phi/(k+h)$ for $\phi,h>0$. If $\phi\geq1$ and $\phi_k\leq 1$, then
\[\prod_{i=0}^{k-1}(1-\phi_i)\leq \frac{h}{k+h}.\]
\end{lemma}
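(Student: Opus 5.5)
The plan is to use the elementary inequality $1-t\leq e^{-t}$ together with an integral comparison for the harmonic-type sum. Since $0\leq \phi_i\leq 1$ for every $i$, each factor $1-\phi_i$ is nonnegative and at most $e^{-\phi_i}$. Multiplying these bounds gives
\[\prod_{i=0}^{k-1}(1-\phi_i)\leq \exp\Big(-\sum_{i=0}^{k-1}\phi_i\Big)=\exp\Big(-\phi\sum_{i=0}^{k-1}\frac{1}{i+h}\Big).\]
For $k=0$ the product is empty and the claim $1\leq h/h$ is trivial, so assume $k\geq 1$.

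Next I lower bound the sum by an integral. Since $t\mapsto 1/t$ is decreasing on $(0,\infty)$, we have $\frac{1}{i+h}\geq\int_{i+h}^{i+1+h}\frac{dt}{t}$ for each $i$. Summing over $i=0,\dots,k-1$ gives
\[\sum_{i=0}^{k-1}\frac{1}{i+h}\geq \int_h^{k+h}\frac{dt}{t}=\log\frac{k+h}{h}.\]
Substituting this into the exponential bound yields
\[\prod_{i=0}^{k-1}(1-\phi_i)\leq \Big(\frac{h}{k+h}\Big)^{\phi}.\]
Because $h/(k+h)\in(0,1]$ and $\phi\geq 1$, we get $(h/(k+h))^{\phi}\leq h/(k+h)$, which is the claim.

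No step here is a real obstacle. The only care needed is the sign condition: $\phi_k\leq 1$ makes every factor nonnegative, so multiplying the inequalities $0\leq 1-\phi_i\leq e^{-\phi_i}$ is legitimate. The integral comparison direction also matters; it works because we need a lower bound on the sum and $1/t$ is decreasing. As an alternative, a purely telescoping argument would also work: for $\phi\geq1$, $1-\frac{\phi}{i+h}\leq 1-\frac{1}{i+h}=\frac{i+h-1}{i+h}$, but the telescoped product is $\frac{h-1}{k+h-1}$, which is only valid when $h\geq 1$ and is messier, so I would use the exponential route.
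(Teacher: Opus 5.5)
Your proof is correct and follows essentially the same route as the paper: bound each factor by $e^{-\phi_i}$, then lower bound $\sum_{i=0}^{k-1}\frac{1}{i+h}$ by $\log\frac{k+h}{h}$ through an integral comparison. The only difference is where $\phi\geq 1$ enters (the paper uses it to drop $\phi$ before the integral comparison, you use it afterwards via $(h/(k+h))^{\phi}\leq h/(k+h)$), and you additionally make explicit the nonnegativity of the factors that the paper leaves implicit; your worry in the aside is also moot, since $\phi_0=\phi/h\leq 1$ together with $\phi\geq 1$ already forces $h\geq 1$.
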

\begin{proof}
    Using the fact that $1+x\leq e^x$ for all $x\in\RR$,
    \begin{align*}
        \prod_{i=0}^{k-1}(1-\phi_i)&=\prod_{i=0}^{k-1}\left(1-\frac{\phi}{i+h}\right)\leq \exp\left(-\phi\sum_{i=0}^{k-1}\frac{1}{i+h}\right)\leq \exp\left(-\sum_{i=0}^{k-1}\frac{1}{i+h}\right).
    \end{align*}
    Here the final inequality follows from our assumption that $\phi\geq 1$. Now, for any non-increasing function $g(x)$, we have that $\sum_{i=a}^{b}g(i)\geq \int_{a}^{b+1} g(x)dx.$ This implies that 
    \begin{align*}
        \sum_{i=0}^{k-1}\frac{1}{i+h}\geq \int_{0}^{k} \frac{1}{x+h}dx=\log\left(\frac{k+h}{h}\right).
    \end{align*}
    Finally, this implies that 
    \[\prod_{i=0}^{k-1}(1-\phi_i)\leq \frac{h}{k+h}.\]
This completes our proof.\qed
\end{proof}

\begin{lemma}\label{lemma:aux2}
    Let $\phi,h, \epsilon>0$. Suppose $\phi_k=\phi/(k+h)$. Let $\epsilon_k=\epsilon/(k+h)^{\efrak}$, where $\efrak>1$. If $\phi\geq 2(\efrak-1)$ and $\phi_k\leq 1$, then 
    \[\sum_{i=0}^{k-1}\epsilon_i\prod_{j=i+1}^{k-1}(1-\phi_j)\leq 2\frac{\epsilon_k}{\phi_k}.\]
\end{lemma}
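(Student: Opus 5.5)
Let $S_k\coloneqq\sum_{i=0}^{k-1}\epsilon_i\prod_{j=i+1}^{k-1}(1-\phi_j)$. The plan is to prove the claim by induction on $k$, using the one-step recursion
\[S_{k+1}=(1-\phi_k)S_k+\epsilon_k,\qquad S_0=0,\]
which follows directly from the definition. Write $b_k\coloneqq 2\epsilon_k/\phi_k=\frac{2\epsilon}{\phi}(k+h)^{-(\efrak-1)}$. The base case $S_0=0\leq b_0$ is trivial. For the inductive step, assume $S_k\leq b_k$. Since $\phi_k\leq 1$ makes $1-\phi_k\geq 0$, we get
\[S_{k+1}\leq (1-\phi_k)b_k+\epsilon_k=b_k-\phi_k b_k+\epsilon_k=b_k-\epsilon_k,\]
using $\phi_k b_k=2\epsilon_k$. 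It therefore suffices to show $b_k-b_{k+1}\leq \epsilon_k$.

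This last comparison is the only real computation and is the main obstacle. Using the convexity bound $1-(1-t)^{a}\leq a t$ for $a=\efrak-1>0$ is the wrong direction when $a<1$. So instead I would use the mean value theorem: $g(x)=x^{-a}$ has $|g'(x)|=a x^{-a-1}$, which is decreasing, so
\[(k+h)^{-a}-(k+1+h)^{-a}\leq a(k+h)^{-a-1}=(\efrak-1)(k+h)^{-\efrak}.\]
Multiplying by $2\epsilon/\phi$ gives
\[b_k-b_{k+1}\leq \frac{2(\efrak-1)}{\phi}\,\epsilon_k\leq \epsilon_k,\]
precisely by the hypothesis $\phi\geq 2(\efrak-1)$. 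This closes the induction, yielding $S_k\leq 2\epsilon_k/\phi_k$ for all $k\geq 0$. The only use of $\phi_k\leq 1$ is the nonnegativity of $1-\phi_k$ in the inductive step.
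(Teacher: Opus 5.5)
Your proof is correct and takes essentially the same route as the paper: the same recursion $S_{k+1}=(1-\phi_k)S_k+\epsilon_k$, the same induction, and the same reduction (using $1-\phi_k\geq 0$) to showing $2\epsilon_k/\phi_k-2\epsilon_{k+1}/\phi_{k+1}\leq \epsilon_k$. The only difference is how you bound $(k+h)^{-(\efrak-1)}-(k+h+1)^{-(\efrak-1)}$: you use the mean value theorem, whereas the paper uses $(1+1/x)^x\leq e$ together with $e^x\geq 1+x$. Both give the same estimate $(\efrak-1)(k+h)^{-\efrak}$.
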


\begin{proof}
    Define sequence $s_{0}=0$ and $s_{k+1}=(1-\phi_k )s_k+\epsilon_k$. Note that $s_k=\sum_{i=0}^{k-1}\epsilon_i\prod_{j=i+1}^{k-1}(1-\phi_j )$. We will use induction to show our required result. Suppose that $s_k\leq 2(\epsilon_k/\phi_k)$ holds for some $k$. Then,
    \begin{align*}
         2\frac{\epsilon_{k+1}}{\phi_{k+1}}-s_{k+1}&= 2\frac{\epsilon_{k+1}}{\phi_{k+1}}-(1- \phi_k)s_k-\epsilon_k\geq  2\frac{\epsilon_{k+1}}{\phi_{k+1}}-(1- \phi_k) 2\frac{\epsilon_k}{\phi_k}-\epsilon_k= 2\left(\frac{\epsilon_{k+1}}{\phi_{k+1}}-\frac{\epsilon_k}{\phi_k}\right)+\epsilon_k.
    \end{align*}
    Here the inequality follows from our assumption that the required inequality holds at time $k$.
    Now,
    \[\left(\frac{\epsilon_{k+1}}{\phi_{k+1}}-\frac{\epsilon_k}{\phi_k}\right)=\frac{\epsilon}{\phi}\left(\frac{1}{(k+h+1)^{\efrak-1}}-\frac{1}{(k+h)^{\efrak-1}}\right).\]
    For $\efrak-1>0$,
    \begin{align*}
        &\frac{1}{(k+h+1)^{\efrak-1}}-\frac{1}{(k+h)^{\efrak-1}}=\frac{1}{(k+h)^{\efrak-1}}\left(\left[\left(1+\frac{1}{k+h}\right)^{k+h}\right]^{-\frac{\efrak-1}{k+h}}-1\right)\\
        &\geq \frac{1}{(k+h)^{\efrak-1}}\left(e^{-\frac{\efrak-1}{k+h}}-1\right)\geq -\frac{1}{(k+h)^{\efrak-1}}\frac{\efrak-1}{k+h}=-\frac{\efrak-1}{\epsilon}\epsilon_k.
    \end{align*}
    Here, the first inequality follows from the inequality $(1+1/x)^x\leq e$ and $e^x\geq 1+x$ for all $x$. This implies 
    \begin{align*}
         2\frac{\epsilon_{k+1}}{\phi_{k+1}}-s_{k+1}&\geq - 2\frac{\epsilon}{\phi}\frac{\efrak-1}{\epsilon}\epsilon_k+\epsilon_k=\epsilon_k\left(1-\frac{2(\efrak-1)}{ \phi}\right).
    \end{align*}
    Since we have the assumption that $\phi\geq 2(\efrak-1)$, the following holds $s_{k+1}\leq  2\frac{\epsilon_{k+1}}{\phi_{k+1}}$. This completes the proof by induction.\qed
\end{proof}


\bibliographystyle{cas-model2-names}

\bibliography{refs}



\end{document}